\theoremstyle{plain}
\newtheorem{thm}{Theorem}[section]
\newtheorem*{thm*}{Theorem}
\newtheorem*{cor*}{Corollary}
\newtheorem*{defn*}{Definition}
\newtheorem{lem}[thm]{Lemma}
\newtheorem{cor}[thm]{Corollary}
\newtheorem*{claim*}{Claim}
\newtheorem*{note*}{Notation}
\theoremstyle{definition}
\newtheorem{defn}[thm]{Definition}
\newtheorem{ex}[thm]{Example}
\theoremstyle{remark}
\numberwithin{equation}{thm}
\def\mod{\mathrm{mod}}
\def\T{{\sf T}}
\newcommand{\calF}{\mathcal{F}}
\def\argmin{\mathrm{argmin}}
\newenvironment{manualtheorem}[1]{%
  \manualtheoreminner
}{\endmanualtheoreminner}
\begin{document}

\setlength{\baselineskip}{17pt}

\title{On the Convergence Proof of AMSGrad and a New Version$^\star$}

\author{Tran Thi Phuong$^{(*,**,***)}$}
\address[$*$]{Faculty of Mathematics and Statistics, Ton Duc Thang University, Ho Chi Minh City, Vietnam. Postal address: 19 Nguyen Huu Tho street, Tan Phong ward, District 7, Ho Chi Minh City, Vietnam.}
\address[$**$]{Meiji University. Postal address: 1-1-1 Higashi-Mita, Tama-ku, Kawasaki-shi, Kanagawa 214-8571, Japan.}
\email{tranthiphuong@tdtu.edu.vn}

\author{Le Trieu Phong$^{(***)}$}
\address[$***$]{National Institute of Information and Communications Technology (NICT). Postal address: 4-2-1, Nukui-Kitamachi, Koganei, Tokyo 184-8795, Japan.}
\email{phong@nict.go.jp}
\thanks{$^\star$A version of this paper appears at IEEE Access DOI: \href{https://ieeexplore.ieee.org/document/8713445?source=authoralert}{10.1109/ACCESS.2019.2916341}}

\maketitle

\begin{abstract}The adaptive moment estimation \textcolor{black}{algorithm} Adam (Kingma and Ba) is a popular optimizer in the training of deep neural networks. However, Reddi et al. have recently shown that the convergence proof of Adam is problematic and proposed a variant of Adam called AMSGrad as a fix. In this paper, we show that  the convergence proof of AMSGrad is also problematic. {Concretely, the problem in the convergence proof of AMSGrad is in handling the hyper-parameters, treating them as equal while they are not. This is also the neglected issue in the convergence proof of Adam. We provide an explicit counter-example of a simple convex optimization setting to show this neglected issue. Depending on manipulating the hyper-parameters}, we present various fixes for this issue. {We provide a new convergence proof for AMSGrad as the first fix}. We also propose a new version of AMSGrad called AdamX as another fix. {Our experiments on the benchmark dataset also support our theoretical results.}

\medskip
\noindent {\sc \keywordsname.}  Optimizer, adaptive moment estimation, Adam, AMSGrad, deep neural networks.
\end{abstract}
\tableofcontents
\section{Introduction and our contributions}
One of the most popular algorithms for training deep neural networks is stochastic gradient descent (SGD) \cite{SGD} and its variants. Among the various variants of SGD, the algorithm with the adaptive moment  estimation Adam \cite{KingmaB14} is widely used  in practice. However, Reddi et al. \cite{AMSGrad} have recently shown that the convergence proof of Adam is problematic and proposed a variant of Adam called AMSGrad  to solve this issue. 

{\medskip \noindent \bf Our contribution.}
In this paper, we point out a flaw in the convergence proof of AMSGrad. We then fix this flaw by providing a new convergence  proof for AMSGrad in the case of special parameters. In addition, in the case of general parameters, we propose a new and slightly modified version of AMSGrad.


To provide more details, let us recall AMSGrad in Algorithm \ref{AMSGrad}, in which the mathematical notation can be fully  found in Section \ref{prem}.

 \begin{algorithm}[H]\label{alg2}
    \caption{AMSGrad (Reddi et al. \cite{AMSGrad}).}\label{AMSGrad}
    \begin{algorithmic}
    	\State\hspace{-\algorithmicindent} \textbf{Input:} $x_1\in \mathcal F$, step size $\{\alpha_t\}_{t=1}^T, \{\beta_{1,t}\}_{t=1}^T, \beta_2$
	\State \hspace{-\algorithmicindent} Set $m_0 = 0, v_0 = 0$, and $\hat v_0 = 0$
   	\For {$(t=1; t\le T; t\gets t+1)$}
		\State $g_t = \nabla f_t(x_{t})$ 
		\State $m_t = \beta_{1,t}\cdot m_{t-1} + (1-\beta_{1,t})\cdot g_t$ 
		\State $v_t = \beta_2\cdot v_{t-1} + (1-\beta_2)\cdot g^2_t$ 
		\State $\hat v_t = \max(\hat v_{t-1} , v_{t})$ and $\hat V_t = \text{diag}(\hat v_t)$
		\State $x_{t+1} =  \prod_{\mathcal F, \sqrt{\hat V_t}}(x_t - \alpha_t \cdot m_t/\sqrt{\hat v_t}) $ 
    	\EndFor
    \end{algorithmic}
    \end{algorithm}

The main theorem for the convergence of AMSGrad in \cite{AMSGrad} is as follows. To simplify the notation, we define $g_t \overset{\Delta}{=} \nabla_{x} f_t(x_t)$, $g_{t,i}$ as the $i^{\text{th}}$ element of $g_t$ and $g_{1:t,i} \in\mathbb R^t$ as a vector that contains the $i^{\text{th}}$ dimension of the gradients over all iterations up to $t$, namely, $g_{1:t,i} = [g_{1,i}, g_{2,i},...,g_{t,i}]$. 

\begin{manualtheorem}{A}[Theorem 4 in \cite{AMSGrad}, {\bf problematic}]\label{mainthmAMSGrad} {\it Let $x_t$ and $v_t$ be the sequences obtained from Algorithm \ref{AMSGrad}, $\alpha_t = \frac{\alpha}{\sqrt{t}}$, $\beta_1 = \beta_{1,1}$, $\beta_{1,t} \le \beta_1$ for all $t\in[T]$ and $\frac{\beta_1}{\sqrt{\beta_2}} \le 1$. Assume that $\mathcal F$ has bounded diameter $D_{\infty}$ and $\lVert{\nabla f_t(x)}\rVert_{\infty} \le G_{\infty}$ for all $t\in [T]$ and $x\in \mathcal F$. For $x_t$ generated using AMSGrad (Algorithm \ref{AMSGrad}), we have the following bound on the regret:
\begin{eqnarray*}
R(T) & \le&  \frac{D_{\infty}^2\sqrt{T}}{\alpha(1-\beta_{1})}\sum_{i=1}^{d} \sqrt{\hat{v}_{T,i}}  + \frac{D_{\infty}^2}{2(1-\beta_{1})} \sum_{i=1}^{d} \sum_{t=1}^{T}\frac{\beta_{1,t}\sqrt{\hat v_{t,i}}}{\alpha_{t}}+  \frac{\alpha\sqrt{ 1+\ln T}}{(1-\beta_1)^2(1-\gamma)\sqrt{1-\beta_2}} \sum_{i=1}^{d}\lVert{g_{1:T, i}}\rVert_2.
\end{eqnarray*}
}
\end{manualtheorem}

\medskip \noindent  In their proof for Theorem \ref{mainthmAMSGrad}, Reddi et al. resolved an issue on the so-called {\it telescopic sum} in the convergence proof of Adam (\cite[Theorem 10.5]{KingmaB14}). Specifically, Reddi et al. adjusted $\hat v_t$ such that \textcolor{black}{all components in the vector} 
\begin{eqnarray}
\frac{\sqrt{\hat v_{t+1}}}{\alpha_{t+1}} - \frac{\sqrt{\hat v_{t}}}{\alpha_{t}} \label{Reddi_trick}
\end{eqnarray}
are {\em always} positive. However, there is {\em another} issue \textcolor{black}{(showed in Section \ref{prob_AMSGrad})} in the convergence proof of Adam that AMSGrad unfortunately neglects. \textcolor{black}{The issue}  affects both the correctness of Reddi et al.'s proof and  the upper bound for the regret in Theorem \ref{mainthmAMSGrad}. To deal with the issue in a general way, we propose to modify Algorithm \ref{AMSGrad} such that all components in the vector
$$\frac{\sqrt{\hat{v}_{t+1}}}{\alpha_{t+1}(1-\boxed{\beta_{1, t+1}})} - \frac{\sqrt{\hat{v}_{t}}}{\alpha_{t}(1-\boxed{\beta_{1, t}})}$$
are {\em always} positive. The differences with (\ref{Reddi_trick}) are highlighted in the boxes for clarity. 

{\medskip \noindent \bf Paper roadmap.} We begin with preliminaries in Section \ref{prem}. We  show where the proof of Theorem \ref{mainthmAMSGrad} becomes invalid  in Section \ref{prob_AMSGrad}. After that, we  suggest two ways to resolve the issue  in Sections \ref{new_con_AMSGrad} and \ref{new_version}. 

{\medskip \noindent \bf Subsequent works.}
The first version of this paper publicly appeared on arXiv on 7 April 2019. On 19 April 2019, Reddi et al. revised their original proofs\footnote{https://arxiv.org/abs/1904.09237}. The revised proof does not suffer from the issue pointed out in Section \ref{prob_AMSGrad} of this paper, although yielding a constant factor missing in the original claims.

\section{Preliminaries}\label{prem}

{\medskip \noindent \bf Notation.} Given a sequence of vectors $\{x_t\}_{1\le t\le T} (1\le T\in \mathbb N)$ in $\mathbb R^d$, we denote its $i^{\text{th}}$ coordinate by $x_{t,i}$ and use $x_t^k$ to denote the elementwise power of $k$ and $\lVert{x_t}\lVert_2$, resp. $\lVert{x_t}\lVert_{\infty}$, to denote its $\ell_{2}$-norm, resp. $\ell_{\infty}$-norm. Let $\mathcal F \subseteq \mathbb R^d$ be a feasible set of points such that $\mathcal F$ has bounded diameter $D_{\infty}$, that is, $\lVert{x-y}\rVert_{\infty} \le D_{\infty}$ for all $x,y\in \mathcal F$, and $\mathcal S^d_+$ denote the set of all positive definite $d\times d$ matrices. {For a matrix $A\in \mathcal S^d_+$, we denote $A^{1/2}$ for the square root of $A$.} 
The projection operation $\prod_{\mathcal F, A} (y)$ for $A \in \mathcal S^d_+$ is defined as $\argmin_{ x\in \mathcal F}\lVert{A^{1/2} (x-y)}\lVert_2$ for all $y \in \mathbb R^d$. When $d=1$ and $\mathcal F \subset \mathbb R$, the positive definite matrix $A$ is a positive number, so that the projection $\prod_{\mathcal F, A} (y)$ becomes $\argmin_{x\in \mathcal F}|x-y|$.
We use $\langle x, y \rangle$ to denote the inner product between $x$ and $y\in \mathbb R^d$. The gradient of a function $f$ evaluated at $x\in \mathbb R^d$ is denoted by $\nabla f(x)$. {For vectors $x, y\in \mathbb R^d$, we use $\sqrt{x}$ or $x^{1/2}$ for element-wise square root, $x^2$ for element-wise square, $x/y$ to denote element-wise division. For an integer $n\in \mathbb N$, we denote by $[n]$ the set of integers $\{1,2,...,n\}$.}

{\medskip \noindent \bf Optimization setup.} Let $f_1, f_2,..., f_T: \mathbb R^d \to \mathbb R$ be an arbitrary sequence of convex cost functions and $x_1\in \mathbb R^d$. At each time $t\ge 1$, the goal is to predict the parameter $x_t$ and evaluate it on a previously unknown cost function $f_t$. Since the nature of the sequence is unknown in advance, the algorithm is evaluated by using the regret, that is, the sum of all the previous differences between the online prediction $f_t(x_t)$ and the best fixed-point parameter $f_t(x^{*})$ from a feasible set $\calF$ for all the previous steps. Concretely, the regret is defined as 
$$R(T) = \sum_{t=1}^T[f_t(x_t) -f_t(x^{*})],$$
where $x^{*} = \text{argmin}_{x\in \calF}\sum_{t=1}^Tf_t(x)$.

\begin{defn} A function $f: \mathbb R^d \rightarrow \mathbb R$ is convex if for all $x, y\in \mathbb R^d$, and all $\lambda \in [0,1]$,
$$\lambda f(x) + (1-\lambda)f(y) \ge f(\lambda x + (1-\lambda)y).$$
\end{defn}

\begin{lem}\label{convex} If a function $f: \mathbb R^d \rightarrow \mathbb R$ is convex, then for all $x, y\in \mathbb R^d$,
$$f(y) \ge f(x) + \nabla f(x)^{\T}(y-x),$$
where $(-)^{\T}$ denotes the transpose of $(-)$.
\end{lem}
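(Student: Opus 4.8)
The plan is to derive the first-order inequality directly from the definition of convexity by forming a one-sided difference quotient and passing to the limit. First I would rewrite the convex combination $\lambda x + (1-\lambda)y$ appearing in the definition in the form $x + (1-\lambda)(y-x)$, and substitute $s = 1-\lambda \in [0,1]$. With this reparametrization the convexity inequality of the preceding definition reads $(1-s)f(x) + s f(y) \ge f(x + s(y-x))$ for all $s \in [0,1]$.

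Next I would rearrange this inequality so as to isolate the quantity $f(y) - f(x)$. Subtracting $f(x)$ from both sides and regrouping gives $s\bigl(f(y) - f(x)\bigr) \ge f(x + s(y-x)) - f(x)$. For $s \in (0,1]$ I would divide by $s$ to obtain
$$f(y) - f(x) \ge \frac{f(x + s(y-x)) - f(x)}{s}.$$
This holds for every sufficiently small positive $s$, and since the left-hand side does not depend on $s$, the inequality is preserved upon taking the limit $s \to 0^{+}$.

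The final step is to evaluate the limit of the right-hand side. Because $f$ is differentiable at $x$, the difference quotient converges to the directional derivative of $f$ at $x$ along the direction $y-x$, which equals $\nabla f(x)^{\T}(y-x)$. Passing to the limit therefore yields $f(y) - f(x) \ge \nabla f(x)^{\T}(y-x)$, which is exactly the claimed bound.

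The only point requiring care is that the statement implicitly assumes $f$ to be differentiable, since the gradient $\nabla f(x)$ appears; granting this, the crux of the argument is the identification of the limit of the difference quotient with $\nabla f(x)^{\T}(y-x)$, while the remaining manipulations are elementary rearrangements of the defining convexity inequality.
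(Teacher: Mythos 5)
Your proof is correct: rewriting the convexity inequality as $(1-s)f(x)+sf(y)\ge f(x+s(y-x))$, isolating the difference quotient, and letting $s\to 0^{+}$ to recover the directional derivative $\nabla f(x)^{\T}(y-x)$ is the standard and complete argument, and you are right that differentiability of $f$ is implicitly assumed because $\nabla f(x)$ appears in the statement. Note that the paper itself states Lemma \ref{convex} as a known preliminary fact and gives no proof at all, so your derivation simply supplies the missing standard argument rather than diverging from anything in the paper.
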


\begin{lem}[\bf{Cauchy--Schwarz inequality}]\label{CS} For all $n\ge 1$,  $u_i, v_i\in \mathbb R (1\le i \le n)$,
$$\left(\sum_{i=1}^n u_iv_i\right)^2 \le \left(\sum_{i=1}^nu_i^2\right) \left(\sum_{i=1}^nv_i^2\right).$$
\end{lem}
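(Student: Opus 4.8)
The plan is to prove the inequality via the classical nonnegative-quadratic (discriminant) argument. I would define the real-valued function $\varphi(t) = \sum_{i=1}^n (u_i t + v_i)^2$ for $t \in \mathbb{R}$. Being a finite sum of squares of real numbers, $\varphi(t) \ge 0$ for every $t$, and this single observation is what drives the entire proof.

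Next I would expand the square to write $\varphi$ as a quadratic polynomial in $t$, namely $\varphi(t) = A t^2 + 2 B t + C$ with $A = \sum_{i=1}^n u_i^2$, $B = \sum_{i=1}^n u_i v_i$, and $C = \sum_{i=1}^n v_i^2$. If $A > 0$, then $\varphi$ is an upward-opening parabola that never dips below zero, so it can have at most one real root; consequently its discriminant must be nonpositive, i.e.\ $(2B)^2 - 4AC \le 0$, which rearranges immediately to $B^2 \le AC$, the desired statement $\left(\sum u_i v_i\right)^2 \le \left(\sum u_i^2\right)\left(\sum v_i^2\right)$.

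The only point requiring care — and really the sole \emph{obstacle}, such as it is — is the degenerate case $A = 0$. When $A = \sum_i u_i^2 = 0$ every $u_i$ vanishes, so both sides of the claimed inequality are zero and it holds trivially; I would dispatch this case separately before invoking the discriminant, ensuring the degree argument above is applied only when the leading coefficient is strictly positive. As an alternative route that avoids any case split, one could instead verify Lagrange's identity $\left(\sum_i u_i^2\right)\left(\sum_i v_i^2\right) - \left(\sum_i u_i v_i\right)^2 = \sum_{1 \le i < j \le n} (u_i v_j - u_j v_i)^2$ by direct expansion; its right-hand side is a sum of squares and hence nonnegative, giving the inequality at once. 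Either way the argument is short, and the discriminant version is the one I would present.
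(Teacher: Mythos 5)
Your proof is correct and complete: the discriminant argument is the standard one, and you correctly isolate the degenerate case $A=\sum_i u_i^2=0$ (where all $u_i$ vanish and both sides are zero) before invoking the nonpositivity of the discriminant; the Lagrange-identity alternative you sketch is equally valid. Note that the paper itself states Lemma \ref{CS} in its preliminaries without any proof, treating it as a known classical fact, so there is no paper argument to compare against --- your write-up simply supplies the missing standard justification.
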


\begin{lem}[\bf{Taylor series}]\label{Taylor} For $\alpha\in \mathbb R$ and $0<\alpha<1$,
$$\sum_{t\ge 1}{\alpha^{t}} = \frac{1}{1-\alpha}$$
and $$\sum_{t\ge 1}{t\alpha^{t-1}} = \frac{1}{(1-\alpha)^2}.$$
\end{lem}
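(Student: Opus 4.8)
The plan is to handle the two identities in turn: derive the first from the closed form of the finite geometric sum, then obtain the second by differentiating the first.

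First I would establish the partial-sum formula. Setting $S_n = \sum_{t=0}^{n}\alpha^t$ and multiplying by $(1-\alpha)$, the sum telescopes to $(1-\alpha)S_n = 1-\alpha^{n+1}$, so that $S_n = \frac{1-\alpha^{n+1}}{1-\alpha}$. Since $0<\alpha<1$ we have $\alpha^{n+1}\to 0$ as $n\to\infty$, and passing to the limit gives $\sum_{t\ge 0}\alpha^t = \frac{1}{1-\alpha}$. (If the summation index is read as starting at $t=1$, one simply subtracts the $t=0$ term; the same telescoping argument applies.)

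For the second identity I would differentiate the geometric series termwise with respect to $\alpha$. On any compact subinterval of $(-1,1)$ the power series $\sum_{t\ge 0}\alpha^t$ converges uniformly, which legitimizes the interchange of summation and differentiation, and yields
$$\sum_{t\ge 1}t\alpha^{t-1} = \frac{d}{d\alpha}\Bigl(\frac{1}{1-\alpha}\Bigr) = \frac{1}{(1-\alpha)^2}.$$
Alternatively, one can avoid analysis entirely by a second telescoping computation: set $T_n = \sum_{t=1}^{n}t\alpha^{t-1}$, evaluate $(1-\alpha)T_n$, recognize the result as a geometric sum together with a single boundary term of the form $n\alpha^n$, and then take $n\to\infty$, using $n\alpha^n\to 0$ for $0<\alpha<1$.

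The only point requiring care is the justification for interchanging summation and differentiation in the first approach; this is settled cleanly by the local uniform convergence of a power series inside its radius of convergence, so no real obstacle arises. Both identities are classical, and either route gives the stated formulas.
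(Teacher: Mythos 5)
Your proof is correct, but there is no proof in the paper to compare it against: Lemma~\ref{Taylor} is listed among the preliminaries as a classical fact and is invoked later (in Lemmas~\ref{vt} and \ref{mainlem} and in Theorem~\ref{mainthm_change_beta1}) without any argument, so any valid derivation suffices. Two remarks on your write-up. First, you were right to flag the summation index: as literally printed, $\sum_{t\ge 1}\alpha^{t}=\alpha/(1-\alpha)$, not $1/(1-\alpha)$; the version the paper actually uses --- e.g.\ to bound $\sum_{k=1}^{s}\beta_2^{s-k}$ by $1/(1-\beta_2)$ in Lemma~\ref{vt}, or $\sum_{k=1}^{t}\beta_1^{t-k}$ by $1/(1-\beta_1)$ in Lemma~\ref{mainlem} --- is $\sum_{t\ge 0}\alpha^{t}=1/(1-\alpha)$, which is exactly what your telescoping argument establishes, so your reading matches the intended statement rather than the misprinted one. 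Second, a small imprecision in your analytic route for the second identity: uniform convergence of $\sum_{t\ge 0}\alpha^{t}$ on compact subintervals of $(-1,1)$ does not by itself license termwise differentiation; the standard theorem requires locally uniform convergence of the differentiated series $\sum_{t\ge 1}t\alpha^{t-1}$, which does hold because a power series and its formal derivative have the same radius of convergence. Your alternative elementary computation --- $(1-\alpha)T_n=\sum_{t=0}^{n-1}\alpha^{t}-n\alpha^{n}$ together with $n\alpha^{n}\to 0$ for $0<\alpha<1$ --- sidesteps that subtlety entirely and is the cleaner way to finish; either route yields the stated formulas.
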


\begin{lem}[\bf{Upper bound for the harmonic series}] \label{Harmonic} For $N\in \mathbb N$,
$$\sum_{n=1}^{N} \frac{1}{n}\le \ln N +1.$$
\end{lem}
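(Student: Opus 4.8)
The plan is to prove the bound by comparing the sum to the integral of $1/x$, exploiting the fact that $1/x$ is a decreasing function on $(0,\infty)$. The key observation is that for each integer $n\ge 2$ and each $x\in[n-1,n]$ we have $1/x\ge 1/n$, since $x\le n$. Integrating this pointwise inequality over the interval $[n-1,n]$, which has length $1$, gives
$$\frac{1}{n}\le \int_{n-1}^{n}\frac{1}{x}\,dx.$$

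Next I would sum these inequalities over $n=2,3,\dots,N$. The right-hand side telescopes into a single integral over $[1,N]$, so that
$$\sum_{n=2}^{N}\frac{1}{n}\le \int_{1}^{N}\frac{1}{x}\,dx=\ln N.$$
Note that I deliberately start the sum at $n=2$ rather than $n=1$: the term $1/1$ cannot be controlled in this way, because the associated comparison interval would be $[0,1]$, on which the integral of $1/x$ diverges. Isolating this first term is the one point in the argument that genuinely requires care; everything else is a routine monotonicity and telescoping computation.

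Finally, I would add the omitted $n=1$ term, which equals $1$, back to both sides to obtain
$$\sum_{n=1}^{N}\frac{1}{n}=1+\sum_{n=2}^{N}\frac{1}{n}\le 1+\ln N,$$
which is precisely the claimed inequality. The degenerate case $N=1$ should be checked separately, but there the statement reads $1\le \ln 1+1=1$ and holds with equality, so no integral comparison is needed. Since the only nontrivial ingredient is the elementary estimate $1/x\ge 1/n$ on $[n-1,n]$, I do not anticipate any real obstacle beyond keeping the index shift and the separated first term straight.
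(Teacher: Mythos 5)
Your proof is correct. Note that the paper states Lemma \ref{Harmonic} as a preliminary fact with no proof at all, so there is nothing to compare against; your integral-comparison argument --- bounding $\frac{1}{n}\le\int_{n-1}^{n}\frac{dx}{x}$ for $n\ge 2$, telescoping to $\ln N$, and isolating the $n=1$ term (with the degenerate case $N=1$ checked separately) --- is the standard proof of this estimate and is complete as written.
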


\begin{lem}\label{sqrt} For $N\in \mathbb N$,
$$\sum_{n=1}^{N} \frac{1}{\sqrt n}\le 2\sqrt N.$$
\end{lem}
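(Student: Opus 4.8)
The plan is to bound the sum by an integral, exploiting the fact that the function $x \mapsto 1/\sqrt{x}$ is monotonically decreasing on $(0,\infty)$. Because the integrand decreases, on each interval $[n-1,n]$ with $n\in[N]$ the constant value $1/\sqrt{n}$ is a lower bound for $1/\sqrt{x}$, and hence $\frac{1}{\sqrt{n}} \le \int_{n-1}^{n} \frac{dx}{\sqrt{x}}$.

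First I would record this per-term comparison for each $n\in[N]$. Summing over $n$ from $1$ to $N$ and using additivity of the integral over adjacent intervals telescopes the right-hand side into a single integral over $[0,N]$, giving
$$\sum_{n=1}^{N}\frac{1}{\sqrt{n}} \le \int_{0}^{N}\frac{dx}{\sqrt{x}}.$$
Then I would evaluate the integral: the antiderivative of $x^{-1/2}$ is $2\sqrt{x}$, so even though the integrand blows up at the lower endpoint, the improper integral converges and equals $2\sqrt{N}-2\sqrt{0}=2\sqrt{N}$, which is precisely the claimed bound.

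The main obstacle, such as it is, is merely the convergence of the improper integral at $x=0$; this is the only place where a word of care is needed, and it is harmless since the singularity is integrable. If one prefers to avoid calculus entirely, an elementary alternative is induction on $N$: the base case $N=1$ reads $1\le 2$, and the inductive step reduces to showing $\frac{1}{\sqrt{N+1}} \le 2\bigl(\sqrt{N+1}-\sqrt{N}\bigr)$. Rationalizing the right-hand side to $\frac{2}{\sqrt{N+1}+\sqrt{N}}$ and using $\sqrt{N+1}+\sqrt{N} \le 2\sqrt{N+1}$ closes the argument. Either route is short; I would present the integral comparison as the cleaner of the two.
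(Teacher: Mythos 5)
Your proof is correct. The paper states Lemma \ref{sqrt} in its preliminaries without any proof, treating it as a standard fact, so there is no in-paper argument to compare against; both your integral-comparison argument (the improper integral $\int_0^N x^{-1/2}\,dx = 2\sqrt{N}$ converges despite the singularity at $0$) and your inductive alternative via $\frac{1}{\sqrt{N+1}} \le \frac{2}{\sqrt{N+1}+\sqrt{N}}$ are complete and valid, with the integral route being the cleaner presentation as you say.
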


\begin{lem}\label{sum} For all $n\in \mathbb N$ and {$a_i, b_i \in \mathbb R$ such that $a_i\ge 0$ and $b_i>0$ for all $i\in[n]$},
$$\frac{\sum_{i=1}^na_i}{\sum_{j=1}^n b_j} \le \sum_{i=1}^{n}\frac{a_i}{b_i}.$$
\end{lem}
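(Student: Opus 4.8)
The plan is to prove the inequality by a direct term-by-term comparison rather than by induction, since the key structural fact is already visible at the level of individual summands. The observation driving the whole argument is that because every $b_j$ is strictly positive, each single denominator satisfies $b_i \le \sum_{j=1}^n b_j$. Taking reciprocals (both quantities being positive) reverses this to $\frac{1}{b_i} \ge \frac{1}{\sum_{j=1}^n b_j}$.

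From here I would multiply through by $a_i$. This is exactly the step where the hypothesis $a_i \ge 0$ is needed: multiplying an inequality by a nonnegative number preserves its direction, so
$$\frac{a_i}{b_i} \ge \frac{a_i}{\sum_{j=1}^n b_j}$$
for every $i \in [n]$. Summing this over $i$ from $1$ to $n$ and pulling the common denominator $\sum_{j=1}^n b_j$ out of the right-hand side yields
$$\sum_{i=1}^n \frac{a_i}{b_i} \ge \sum_{i=1}^n \frac{a_i}{\sum_{j=1}^n b_j} = \frac{\sum_{i=1}^n a_i}{\sum_{j=1}^n b_j},$$
which is precisely the claimed bound.

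There is essentially no serious obstacle in this argument; the only point requiring care is the careful bookkeeping of the sign hypotheses. Specifically, $b_i > 0$ is used twice — once to guarantee the denominators make sense and once to justify $b_i \le \sum_{j=1}^n b_j$ (which fails if negative $b_j$ are allowed) — while $a_i \ge 0$ is exactly what licenses the reciprocal inequality to survive multiplication. A cleaner-looking but less transparent alternative would be induction on $n$, clearing denominators in the inductive step; I would avoid this because the cross-multiplied inequality becomes notationally heavier and obscures why both sign conditions are indispensable. The direct comparison above is both shorter and makes the role of each hypothesis explicit.
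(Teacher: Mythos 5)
Your proof is correct: the chain $b_i \le \sum_{j=1}^n b_j$, hence $\tfrac{1}{b_i} \ge \tfrac{1}{\sum_{j=1}^n b_j}$, hence $\tfrac{a_i}{b_i} \ge \tfrac{a_i}{\sum_{j=1}^n b_j}$ after multiplying by $a_i \ge 0$, followed by summation, is a complete and valid argument, and your accounting of where each sign hypothesis is used is accurate. Note that the paper states this lemma as a preliminary fact without giving any proof at all, so there is nothing to compare against; your direct term-by-term comparison is the natural argument and fills that gap cleanly.
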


\begin{lem}\label{McM&Str}{\cite[Lemma 3 in arXiv version]{AMSGrad}} For any $Q \in \mathcal S^d_+$ and convex feasible set $\mathcal F\subseteq \mathbb R^d$, suppose $u_1 = \min_{x\in \mathcal F}\lVert Q ^{1/2}(x-z_1)\rVert$ and $u_2 = \min_{x\in \mathcal F}\lVert Q ^{1/2}(x-z_2)\rVert$. Then, we have $$\lVert Q ^{1/2}(u_1-u_2)\rVert \le \lVert Q ^{1/2}(z_1-z_2)\rVert.$$
\end{lem}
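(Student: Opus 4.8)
The statement is the familiar fact that projection onto a convex set is non-expansive, here with respect to the $Q$-weighted geometry, and I would prove it by the standard variational-inequality argument adapted to the inner product induced by $Q$. First I would introduce, for $x,y\in\mathbb R^d$, the bilinear form $\langle x, y\rangle_Q := (Q^{1/2}x)^{\T}(Q^{1/2}y) = x^{\T}Qy$. Since $Q\in\mathcal S^d_+$ is symmetric positive definite, this is a genuine inner product on $\mathbb R^d$, and its induced norm is exactly $\lVert x\rVert_Q := \langle x,x\rangle_Q^{1/2} = \lVert Q^{1/2}x\rVert$. In this language $u_1$ and $u_2$ are the $\langle\cdot,\cdot\rangle_Q$-projections $\prod_{\mathcal F, Q}(z_1)$ and $\prod_{\mathcal F, Q}(z_2)$ (the symbol $\min$ in the statement denotes the arg min), and the goal is $\lVert u_1-u_2\rVert_Q \le \lVert z_1 - z_2\rVert_Q$.

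The key step is to establish the first-order optimality (obtuse-angle) characterization: if $u = \prod_{\mathcal F, Q}(z)$ then $\langle z-u, x-u\rangle_Q \le 0$ for every $x\in\mathcal F$. To get this I would fix $x\in\mathcal F$ and use convexity of $\mathcal F$ to note that $u+\lambda(x-u)\in\mathcal F$ for all $\lambda\in[0,1]$; the scalar function $\phi(\lambda) := \lVert z-u-\lambda(x-u)\rVert_Q^2$ is then minimized at $\lambda=0$ over $[0,1]$, so its right derivative there is non-negative. Computing $\phi'(0) = -2\langle z-u, x-u\rangle_Q$ yields the claimed inequality. This is the one place where convexity of $\mathcal F$ is genuinely used, and it is the crux of the whole argument; everything after it is mechanical.

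Next I would apply this characterization twice: with $z=z_1$, $u=u_1$ and test point $x=u_2$, and with $z=z_2$, $u=u_2$ and test point $x=u_1$. This gives
\begin{align*}
\langle z_1-u_1,\, u_2-u_1\rangle_Q &\le 0, \\
\langle z_2-u_2,\, u_1-u_2\rangle_Q &\le 0.
\end{align*}
Adding them and regrouping (using $u_2-u_1 = -(u_1-u_2)$) collapses the left-hand side to $\langle (z_1-z_2)-(u_1-u_2),\, u_2-u_1\rangle_Q \le 0$, which rearranges to
$$\lVert u_1-u_2\rVert_Q^2 \le \langle z_1-z_2,\, u_1-u_2\rangle_Q.$$
Finally, applying the Cauchy--Schwarz inequality for $\langle\cdot,\cdot\rangle_Q$ (Lemma \ref{CS} applied to the coordinates of $Q^{1/2}(z_1-z_2)$ and $Q^{1/2}(u_1-u_2)$) bounds the right-hand side by $\lVert z_1-z_2\rVert_Q\,\lVert u_1-u_2\rVert_Q$; dividing through by $\lVert u_1-u_2\rVert_Q$ (the case $u_1=u_2$ being trivial) gives $\lVert u_1-u_2\rVert_Q \le \lVert z_1-z_2\rVert_Q$, which is the assertion.

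The main obstacle is thus only the optimality characterization of the second paragraph. I would also remark that the minimizers $u_1,u_2$ exist and are unique, which follows from $\mathcal F$ being closed (it has bounded diameter) and convex, together with strict convexity of $\lVert\cdot\rVert_Q^2$; but since the statement already hands us $u_1,u_2$ as minimizers, only their optimality — not their existence — is actually needed for the bound.
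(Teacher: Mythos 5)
Your proof is correct. Note, however, that the paper itself contains no proof of this lemma: it is imported verbatim by citation from Reddi et al.\ (where it is in turn adapted from McMahan and Streeter), so there is no in-paper argument to compare against. What you have written is essentially the standard proof that metric projection onto a convex set is non-expansive, carried out in the inner product $\langle x,y\rangle_Q = x^{\T}Qy$: the obtuse-angle (variational inequality) characterization of the minimizer, applied twice with the two projections as mutual test points, followed by Cauchy--Schwarz. This is also the substance of the proof in the cited reference, so your proposal supplies, in self-contained form, exactly what the paper delegates to the literature. Two minor remarks. First, you correctly read the statement's $u_1 = \min_{x\in\mathcal F}\lVert Q^{1/2}(x-z_1)\rVert$ as an abuse of notation for the $\argmin$; without that reading the statement is not even type-correct, and the paper's own use of the lemma (with $u_1 = x_{t+1}$, $u_2 = x^*$) confirms your interpretation. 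Second, your parenthetical that closedness of $\mathcal F$ ``follows from'' bounded diameter is not right --- a bounded set need not be closed --- but as you yourself observe, existence and uniqueness of the minimizers are never used: the argument only needs the optimality of the given $u_1, u_2$, so this slip does not affect the proof.
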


\section{Issue  in the convergence proof of AMSGrad}\label{prob_AMSGrad}
Before showing the issue  in the convergence proof of AMSGrad, let us recall and prove the following \textcolor{black}{inequality, which also appears in \cite{AMSGrad}}.

\begin{lem}\label{prepare_lem} Algorithm \ref{AMSGrad} achieves the following guarantee, for all $T\ge 1$:
\begin{eqnarray*} 
R(T)
& \le& \sum_{i=1}^{d} \sum_{t=1}^{T} \frac{\sqrt{\hat v_{t,i}}}{ 2\alpha_t(1-\beta_{1,t})  }\left( ( x_{t,i} - x^{*}_{,i} )^2 - (x_{t+1, i} - x^{*}_{,i})^2 \right)\\
& & +  \sum_{i=1}^{d} \sum_{t=1}^{T} \frac{\alpha_t}{1-\beta_{1}} \frac{ m_{t,i}^2}{\sqrt{\hat v_{t,i}}}   +\sum_{i=1}^{d} \sum_{t=2}^{T}\frac{\beta_{1,t}\sqrt{\hat{v}_{t-1,i}}}{2\alpha_{t-1}(1-\beta_{1})}  (x_{t,i} - x^{*}_{,i})^2.
\end{eqnarray*} 
\end{lem}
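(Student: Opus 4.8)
The plan is to start from the regret definition and reduce each summand $f_t(x_t) - f_t(x^*)$ to a quadratic that telescopes after summation. First I would invoke convexity via Lemma \ref{convex} to write $f_t(x_t) - f_t(x^*) \le \langle g_t, x_t - x^*\rangle = \sum_{i=1}^d g_{t,i}(x_{t,i} - x^*_{,i})$, so it suffices to bound $\sum_{t=1}^T\sum_{i=1}^d g_{t,i}(x_{t,i}-x^*_{,i})$ by the stated right-hand side.

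The core is to control the weighted distance to $x^*$ across one step. Writing $y_{t+1} = x_t - \alpha_t m_t/\sqrt{\hat v_t}$ for the point projected in Algorithm \ref{AMSGrad} and using $x^* \in \mathcal{F}$, I would apply Lemma \ref{McM&Str} with $Q = \sqrt{\hat V_t}$ (so $Q^{1/2} = \hat V_t^{1/4}$) to obtain $\lVert \hat V_t^{1/4}(x_{t+1}-x^*)\rVert \le \lVert \hat V_t^{1/4}(y_{t+1}-x^*)\rVert$. Squaring and expanding the right-hand norm coordinatewise, and noting that the cross term simplifies because $\hat V_t^{1/2}(m_t/\sqrt{\hat v_t}) = m_t$, yields
\[
\sum_{i=1}^d \sqrt{\hat v_{t,i}}\,(x_{t+1,i}-x^*_{,i})^2 \le \sum_{i=1}^d \sqrt{\hat v_{t,i}}\,(x_{t,i}-x^*_{,i})^2 - 2\alpha_t\sum_{i=1}^d m_{t,i}(x_{t,i}-x^*_{,i}) + \alpha_t^2\sum_{i=1}^d \frac{m_{t,i}^2}{\sqrt{\hat v_{t,i}}}.
\]
Solving for the inner product bounds $\sum_i m_{t,i}(x_{t,i}-x^*_{,i})$ by a telescoping quadratic plus $\tfrac{\alpha_t}{2}\sum_i m_{t,i}^2/\sqrt{\hat v_{t,i}}$.

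Next I convert this into a bound on the gradient inner product. From the recursion $m_{t,i} = \beta_{1,t} m_{t-1,i} + (1-\beta_{1,t})g_{t,i}$, I solve $g_{t,i} = (m_{t,i} - \beta_{1,t}m_{t-1,i})/(1-\beta_{1,t})$, so that $\sum_i g_{t,i}(x_{t,i}-x^*_{,i})$ splits into the already-bounded $\tfrac{1}{1-\beta_{1,t}}\sum_i m_{t,i}(x_{t,i}-x^*_{,i})$ and the term $-\tfrac{\beta_{1,t}}{1-\beta_{1,t}}\sum_i m_{t-1,i}(x_{t,i}-x^*_{,i})$. For the latter I apply Young's inequality with the weight $\sqrt{\hat v_{t-1,i}}/\alpha_{t-1}$, namely $|m_{t-1,i}|\,|x_{t,i}-x^*_{,i}| \le \tfrac{\alpha_{t-1}}{2}\tfrac{m_{t-1,i}^2}{\sqrt{\hat v_{t-1,i}}} + \tfrac{\sqrt{\hat v_{t-1,i}}}{2\alpha_{t-1}}(x_{t,i}-x^*_{,i})^2$; this particular choice of weight is what makes the resulting pieces match the target sums. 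Summing over $t\in[T]$ and $i\in[d]$, the telescoping quadratic gives the first target sum directly. The $m_{t,i}^2/\sqrt{\hat v_{t,i}}$ contributions from the current step and, after reindexing $t-1\mapsto t$ (using $m_0=0$), from the Young step combine; bounding $\tfrac{1}{1-\beta_{1,t}}\le \tfrac{1}{1-\beta_1}$ and $\tfrac{\beta_{1,t}}{1-\beta_{1,t}}\le \tfrac{\beta_1}{1-\beta_1}$ produces coefficient $\tfrac{\alpha_t(1+\beta_1)}{2(1-\beta_1)}\le \tfrac{\alpha_t}{1-\beta_1}$ (since $1+\beta_1\le 2$), giving the second sum; the residual $(x_{t,i}-x^*_{,i})^2$ term, dropping $t=1$ via $\hat v_0=0$ and using $\tfrac{1}{1-\beta_{1,t}}\le\tfrac{1}{1-\beta_1}$, gives the third.

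The step I expect to be most delicate is the bookkeeping in this final summation: one must align the shifted momentum index correctly, exploit $m_0=\hat v_0=0$ to kill the boundary terms, and keep the distinction between $\beta_{1,t}$ and $\beta_1$ straight while combining coefficients — applying each inequality $\beta_{1,t}\le\beta_1$ in the correct direction. This is precisely the delicate point the paper later flags as mishandled in the original AMSGrad analysis, so care here is essential.
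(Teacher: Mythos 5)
Your proposal is correct and follows essentially the same route as the paper's proof: convexity, Lemma \ref{McM&Str} applied to the projected update, isolation of $\langle g_t, x_t-x^*\rangle$ via the momentum recursion, Young's inequality with weight $\sqrt{\hat v_{t-1,i}}/\alpha_{t-1}$, and the same index-shift bookkeeping (your coefficient combination $1+\beta_1\le 2$ is in fact marginally tighter than the paper's bound $\beta_{1,t+1}\le 1$). One trivial slip: the $t=1$ boundary term vanishes because $m_0=0$, not because $\hat v_0=0$ — as you yourself note when reindexing.
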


\begin{proof} We note that 
\begin{eqnarray*} 
x_{t+1} =  \prod_{\mathcal F, \sqrt{\hat V_t}}(x_t - \alpha_t \cdot {\hat V^{-1/2}_t}m_t) = \min_{x\in \mathcal F}\lVert \hat V^{1/4}(x-(x_t - \alpha_t \hat V^{-1/2}m_t))\rVert
\end{eqnarray*}
and $ \prod_{\mathcal F, \sqrt{\hat V_t}}(x^{*}) = x^{*}$ for all $x^{*} \in \mathcal F$. 
For all $1\le t \le T$, put $g_t = \nabla_{x} f_t(x_t)$. Using Lemma \ref{McM&Str} with $u_1 = x_{t+1}$ and $u_2 = x^{*}$, we have 
\begin{eqnarray*} 
\lVert \hat V^{1/4}(x_{t+1} - x^{*}) \rVert^2 
&\le  & \lVert \hat V^{1/4}(x_t - \alpha_t \hat V^{-1/2}m_t - x^{*}) \rVert^2 \\
& = & \lVert \hat V^{1/4}(x_{t} - x^{*}) \rVert^2  + \alpha^2_t \lVert \hat V^{-1/4}m_t\rVert^2 - 2\alpha_t\langle m_t, x_t- x^*\rangle\\
& = & \lVert \hat V^{1/4}(x_{t} - x^{*}) \rVert^2  + \alpha^2_t \lVert \hat V^{-1/4}m_t\rVert^2 - 2\alpha_t\langle \beta_{1,t}m_{t-1} + (1-\beta_{1,t})g_t, x_t- x^*\rangle.
\end{eqnarray*} 
This yields
\begin{eqnarray*} 
\langle g_t, x_t- x^*\rangle 
& \le & \frac{1}{2\alpha_t(1-\beta_{1,t})}\left[ \lVert \hat V^{1/4}(x_{t} - x^{*}) \rVert^2 -  \lVert \hat V^{1/4}(x_{t+1} - x^{*}) \rVert^2  \right] \\  
& &+\frac{\alpha_t}{2(1-\beta_{1,t})}\lVert \hat V^{-1/4}m_t\rVert^2 
 - \frac{\beta_{1,t}}{1-\beta_{1,t}}\langle m_{t-1}, x_t- x^*\rangle. 
\end{eqnarray*} 
Therefore, we obtain
\begin{eqnarray} \label{tvheq}
\sum_{i=1}^{d} g_{t,i} (x_{t,i} - x^{*}_{,i}) 
& \le & \sum_{i=1}^{d} \frac{\sqrt{\hat v_{t,i}}}{2\alpha_t(1-\beta_{1,t}) } \left(( x_{t,i} - x^{*}_{,i} )^2  - (x_{t+1, i} - x^{*}_{,i})^2\right)\\
& & + \sum_{i=1}^{d} \frac{\alpha_t}{2(1-\beta_{1,t}) } \frac{ m^2_{t,i}}{\sqrt{\hat v_{t,i}}} 
-  \sum_{i=1}^{d}\frac{\beta_{1,t}}{1 - \beta_{1,t}}m_{t-1,i}( x_{t,i} - x^{*}_{,i}).\nonumber
\end{eqnarray} 
Moreover, by Lemma \ref{convex}, we have $f_t(x^{*}) -f_t(x_t) \ge g_t^{\text T}(x^{*}-x_t)$, where $g_t^{\T}$ denotes the transpose of vector $g_t$. This means that
$$ f_t(x_t) - f_t(x^{*}) \le g_t^{\T}(x_t-x^{*}) = \sum_{i=1}^d g_{t,i}(x_{t,i}-x^{*}_{,i}).$$
Hence,
\begin{eqnarray} \label{regreteq}
R(T)  =  \sum_{t=1}^{T} [f_t(x_t) - f_t(x^{*})]
 \le \sum_{t=1}^{T}g_{t}^{\T} (x_{t} - x^{*})
 =  \sum_{t=1}^{T}\sum_{i=1}^{d}g_{t,i} (x_{t,i} - x^{*}_{,i}).
\end{eqnarray}
%
Combining (\ref{tvheq}) with (\ref{regreteq}), we obtain
\begin{eqnarray*} 
R(T) 
& \le&   \sum_{i=1}^{d} \sum_{t=1}^{T} \frac{\sqrt{\hat v_{t,i}}}{2\alpha_t(1-\beta_{1,t}) } \left(( x_{t,i} - x^{*}_{,i} )^2  - (x_{t+1, i} - x^{*}_{,i})^2\right)\\
& & +  \sum_{i=1}^{d} \sum_{t=1}^{T}  \frac{\alpha_t}{2(1-\beta_{1,t}) } \frac{ m^2_{t,i}}{\sqrt{\hat v_{t,i}}} +  \sum_{i=1}^{d} \sum_{t=2}^{T}  \frac{\beta_{1,t}}{1-\beta_{1,t}}m_{t-1,i}(x^{*}_{,i} - x_{t,i}).
\end{eqnarray*} 
On the other hand, for all $t\ge 2$, we have
\begin{eqnarray*} 
m_{t-1,t}( x^{*}_{,i} - x_{t,i})
& = & \frac{(\hat{v}_{t-1,i})^{1/4}}{\sqrt{\alpha_{t-1}}}  ( x^{*}_{,i} - x_{t,i})  \sqrt{\alpha_{t-1}} \frac{m_{t-1,i}}{(\hat{v}_{t-1,i})^{1/4}}  \\
& \le & \frac{\sqrt{\hat{v}_{t-1,i}}}{2\alpha_{t-1}}  (x_{t,i} - x^{*}_{,i})^2 + {\alpha_{t-1}} \frac{m^2_{t-1,i}}{2\sqrt{\hat{v}_{t-1,i}}},
\end{eqnarray*}
where the inequality is from the fact that $ab\le a^2/2 + b^2/2$ for any $a,b$. Hence, 
\begin{eqnarray*} 
\sum_{i=1}^{d} \sum_{t=1}^{T}  \frac{\beta_{1,t}}{1-\beta_{1,t}}m_{t-1,i}(x^{*}_{,i} - x_{t,i}) 
& = & \sum_{i=1}^{d} \sum_{t=2}^{T}  \frac{\beta_{1,t}}{1-\beta_{1,t}}m_{t-1,i}(x^{*}_{,i} - x_{t,i}) \\
& \le & \sum_{i=1}^{d} \sum_{t=2}^{T}\frac{\beta_{1,t}\sqrt{\hat{v}_{t-1,i}}}{2(1-\beta_{1,t})\alpha_{t-1}}  (x_{t,i} - x^{*}_{,i})^2  \\
& & + \sum_{i=1}^{d} \sum_{t=2}^{T} \frac{\beta_{1,t}{\alpha_{t-1}}}{2(1-\beta_{1,t})} \frac{m^2_{t-1,i}}{\sqrt{\hat{v}_{t-1,i}}}.
\end{eqnarray*}
Therefore, we obtain
\begin{eqnarray*} 
R(T)
& \le& \sum_{i=1}^{d} \sum_{t=1}^{T} \frac{\sqrt{\hat v_{t,i}}}{ 2\alpha_t(1-\beta_{1,t})  }\left( ( x_{t,i} - x^{*}_{,i} )^2 - (x_{t+1, i} - x^{*}_{,i})^2 \right)\\
& & +  \sum_{i=1}^{d} \sum_{t=1}^{T} \frac{\alpha_t}{2(1-\beta_{1,t})} \frac{ m_{t,i}^2}{\sqrt{\hat v_{t,i}}} + \sum_{i=1}^{d} \sum_{t=2}^{T}\frac{\beta_{1,t}\alpha_{t-1}}{2(1-\beta_{1,t})} \frac{m^2_{t-1,i}}{\sqrt{\hat{v}_{t-1,i}}}\\
& & +  \sum_{i=1}^{d} \sum_{t=2}^{T}\frac{\beta_{1,t}\sqrt{\hat{v}_{t-1,i}}}{2\alpha_{t-1}(1-\beta_{1,t})}  (x_{t,i} - x^{*}_{,i})^2.
\end{eqnarray*} 
Since $\beta_{1,t} \le \beta_1 (1\le t\le T)$, we obtain
\begin{eqnarray*}
\sum_{i=1}^{d} \sum_{t=2}^{T}\frac{\beta_{1,t}\sqrt{\hat{v}_{t-1,i}}}{2\alpha_{t-1}(1-\beta_{1,t})}  (x_{t,i} - x^{*}_{,i})^2 &\le & \sum_{i=1}^{d} \sum_{t=2}^{T}\frac{\beta_{1,t}\sqrt{\hat{v}_{t-1,i}}}{2\alpha_{t-1}(1-\beta_{1})}  (x_{t,i} - x^{*}_{,i})^2.
 \end{eqnarray*}
 Moreover, we have
\begin{eqnarray*}
\sum_{i=1}^{d} \sum_{t=2}^{T}\frac{\beta_{1,t}\alpha_{t-1}}{2(1-\beta_{1,t})} \frac{m^2_{t-1,i}}{\sqrt{\hat{v}_{t-1,i}}}
& = &  \sum_{i=1}^{d} \sum_{t=1}^{T-1}\frac{\beta_{1,t+1}\alpha_{t}}{2(1-\beta_{1,t+1})} \frac{m^2_{t,i}}{\sqrt{\hat{v}_{t,i}}}\\
& \le & \sum_{i=1}^{d} \sum_{t=1}^{T} \frac{\alpha_t}{2(1-\beta_{1,t+1})} \frac{ m_{t,i}^2}{\sqrt{\hat v_{t,i}}}\\
& \le & \sum_{i=1}^{d} \sum_{t=1}^{T} \frac{\alpha_t}{2(1-\beta_{1})} \frac{ m_{t,i}^2}{\sqrt{\hat v_{t,i}}},\\
\end{eqnarray*}
where the last inequality is from the assumption that $\beta_{1,t} \le \beta_1 <1 (1\le t\le T)$. Therefore,
\begin{eqnarray*}
\sum_{i=1}^{d} \sum_{t=1}^{T} \frac{\alpha_t}{2(1-\beta_{1,t})} \frac{ m_{t,i}^2}{\sqrt{\hat v_{t,i}}} + \sum_{i=1}^{d} \sum_{t=2}^{T}\frac{\beta_{1,t}\alpha_{t-1}}{2(1-\beta_{1,t})} \frac{m^2_{t-1,i}}{\sqrt{\hat{v}_{t-1,i}}} 
& \le & \sum_{i=1}^{d} \sum_{t=1}^{T} \frac{\alpha_t}{1-\beta_{1}} \frac{ m_{t,i}^2}{\sqrt{\hat v_{t,i}}}
\end{eqnarray*}
and we obtain the desired bound for $R(T)$.
\end{proof}

{\medskip \noindent \bf Issue in the convergence proof of  AMSGrad.} We denote the terms on the right hand-side of the upper bound for $R(T)$ in Lemma \ref{prepare_lem} as  
\begin{eqnarray}\label{eqmain}
\sum_{i=1}^{d} \sum_{t=1}^{T} \frac{\sqrt{\hat v_{t,i}}}{ 2\alpha_t(1-\beta_{1,t})  }\left( ( x_{t,i} - x^{*}_{,i} )^2 - (x_{t+1, i} - x^{*}_{,i})^2 \right),
\end{eqnarray}
\begin{eqnarray}\label{eqsecond}
 \sum_{i=1}^{d} \sum_{t=1}^{T} \frac{\alpha_t}{1-\beta_{1}} \frac{ m_{t,i}^2}{\sqrt{\hat v_{t,i}}},
\end{eqnarray}
and
\begin{eqnarray}\label{eqthird}
\sum_{i=1}^{d} \sum_{t=2}^{T}\frac{\beta_{1,t}\sqrt{\hat{v}_{t-1,i}}}{2\alpha_{t-1}(1-\beta_{1})}  (x_{t,i} - x^{*}_{,i})^2.
\end{eqnarray}

 The issue  in the proof of the convergence theorem of AMSGrad \cite[Theorem 4]{AMSGrad} becomes on examining the term (\ref{eqmain}). 
Indeed, in \cite[page 18]{AMSGrad}, Reddi et al. used\footnote{Concretely, on page 18 of \cite{AMSGrad}, it is stated that \lq\lq {\it The [...]  inequality   use the fact that $\beta_{1,t} \le \beta_{1}$.}"} the property that $\beta_{1,t} \le \beta_{1}$, and hence
$$\frac{1}{1-\beta_{1,t}} \leq \frac{1}{1-\beta_{1}},$$
 to replace all $\beta_{1,t}$ by $\beta_{1}$ as 
\begin{eqnarray*}
(\ref {eqmain}) & {\color{red}\le} & \sum_{i=1}^{d} \sum_{t=1}^{T} \frac{\sqrt{\hat v_{t,i}}}{ 2\alpha_t(1-{\color{black}\beta_{1}})  }\left( ( x_{t,i} - x^{*}_{,i} )^2 - (x_{t+1, i} - x^{*}_{,i})^2 \right) \\
 & \le &  \sum_{i=1}^{d} \frac{ \sqrt{\hat{v}_{1,i}}}{2\alpha_1(1-\beta_{1})}   (x_{1, i} - x^{*}_{,i})^2  + \frac{1}{2(1-\beta_{1})}\sum_{i=1}^{d} \sum_{t=2}^{T} (x_{t, i} - x^{*}_{,i})^2 \left( \frac{\sqrt{\hat{v}_{t,i}}}{\alpha_{t}} - \frac{\sqrt{\hat{v}_{t-1,i}}}{\alpha_{t-1}}  \right).
\end{eqnarray*}
However, the first inequality \textcolor{black}{(in red)}  is not guaranteed because  the quantity
$$( x_{t,i} - x^{*}_{,i} )^2 - (x_{t+1, i} - x^{*}_{,i})^2 $$ 
in (\ref{eqmain}) may be {\em both} negative {\em and} positive \textcolor{black}{as shown in Example \ref{exa_}}. This is also a neglected issue in the convergence proofs in Kingma and Ba \cite[Theorem 10.5]{KingmaB14}, Luo et al. \cite[Theorem 4]{LuoXiongLiu}, Bock et al. \cite[Theorem 4.4]{BoGoWe}, and Chen and Gu \cite[Theorem 4.2]{Padam}. 

\begin{ex}[for AMSGrad convergence  proof] \label{exa_}\normalfont We use the \textcolor{black}{function} in the Synthetic Experiment of Reddi et al. \cite[Page 6]{AMSGrad}
\begin{align*}
f_t(x)
            &=\begin{cases}
               \eqmakebox[W][l]{$1010x$,} & t ~\mod ~101 = 1\\
               \eqmakebox[W][l]{$-10x$,} & \text{otherwise, }
             \end{cases}
\end{align*}
with the constraint set $\mathcal F = [-1,1]$. The optimal solution is $x^*=-1$.
By the proof of \cite[Theorem 1]{AMSGrad}, the initial point $x_1 = 1$. By Algorithm \ref{AMSGrad}, $m_0 = 0$, $v_0 = 0$, and $\hat v_0 = 0$.  We  choose $\beta_1 = 0.9$, $\beta_{1,t} = \beta_1\lambda^{t-1}$, where $\lambda = 0.001$, $\beta_2 = 0.999$, and $\alpha_t = \alpha/\sqrt t$, where $\alpha = 0.001$. Under this setting, we have $f_1(x_1) = 1010x_1$, $f_2(x_2) = -10x_2$, $f_3(x_3) = -10x_3$ and hence 
\begin{align*}
g_1 &=  \nabla f_1(x_1) = 1010,\\
m_1 &=  \beta_{1,1}m_0 + (1-\beta_{1,1})g_1 = (1-0.9)1010 = 101,\\
v_1 & = \beta_2v_0 + (1-\beta_2)g_1^2 = (1-0.999)1010^2 = 1020.1,\\ 
\hat v_1 &= \max(\hat v_0, v_1) = v_1.
\end{align*}
Therefore,
\begin{align*}
x_1 - \alpha_1 m_1/\sqrt{\hat v_1}  &=  1-(0.001)101/\sqrt{1020.1}\\
& = 0.9968377223398316.
\end{align*}
Since $x_1 - \alpha_1 m_1/\sqrt{\hat v_1}>0$, we have
\begin{align*}
x_2 &= \prod_{\mathcal F}(x_1 - \alpha_1 m_1/\sqrt{\hat v_1}) \\
& = \min(1, x_1 - \alpha_1 m_1/\sqrt{\hat v_1}) \\
&= 0.9968377223398316.
\end{align*}
Hence, $$( x_1 - x^{*} )^2 - (x_2 - x^{*})^2  = 0.001264811064067839 >0.$$
At $t=2$, we have
\begin{align*}
g_2 &= -10,\\
m_2 &=  \beta_{1,2}m_1 + (1-\beta_{1,2})g_2 \\
& = (0.9)(0.001)(101) + [1-(0.9)(0.001)](-10) \\
&= -9.9001,\\
v_2 & =   \beta_2v_1 + (1-\beta_2)g_2^2\\
& = (0.999)(1020.1) + (1-0.999)(-10)^2 \\
&= 1019.1799000000001,\\
\hat v_2 &= \max(\hat v_1, v_2) = v_1\\
& = 1020.1.
\end{align*}
Therefore,
\begin{eqnarray*}
x_2 \!-\! \alpha_2 m_2/\sqrt{\hat v_2} &=&  0.9968377223398316 \!-\! \frac{0.001}{\sqrt 2} \frac{(-9.9001)}{\sqrt{1020.1}} \\
&=& 0.9970569034941291.
\end{eqnarray*}
Since $x_2 - \alpha_2 m_2/\sqrt{\hat v_2}>0$, we obtain
\begin{align*}
x_3& = \prod_{\mathcal F}(x_2 - \alpha_2 m_2/\sqrt{\hat v_2}) \\
&= \min(1, x_2 - \alpha_2 m_2/\sqrt{\hat v_2}) \\
&= 0.9970569034941291.
\end{align*}
Hence, $$( x_2 - x^{*} )^2 - (x_3 - x^{*})^2 = -0.0008753864342319062 <0.$$

\end{ex}

{\medskip \noindent \bf Outline of our solution.}
Let us  rewrite (\ref{eqmain}) as
\begin{eqnarray*}
(\ref {eqmain}) & = & \sum_{i=1}^{d} \frac{ \sqrt{\hat{v}_{1,i}}}{2\alpha_1(1-\beta_{1,1})}   (x_{1, i} - x^{*}_{,i})^2  +  \sum_{i=1}^{d} \sum_{t=2}^{T}\frac{ \sqrt{\hat{v}_{t,i}}}{2\alpha_t(1-\beta_{1,t})}   (x_{t, i} - x^{*}_{,i})^2 \\
 & & -  \sum_{i=1}^{d} \sum_{t=2}^{T}\frac{\sqrt{\hat{v}_{t-1,i}}}{2\alpha_{t-1}(1-\beta_{1,t-1})}   (x_{t, i} - x^{*}_{,i})^2  -  \sum_{i=1}^{d}\frac{\sqrt{\hat{v}_{T,i}}}{2\alpha_T(1-\beta_{1,T})}   (x_{T+1, i} - x^{*}_{,i})^2. \end{eqnarray*}
Omitting the term $\sum_{i=1}^{d}\frac{\sqrt{\hat{v}_{T,i}}}{2\alpha_T(1-\beta_{1,T})}   (x_{T+1, i} - x^{*}_{,i})^2 $, we obtain
\begin{eqnarray}\label{eqtemp2}
(\ref {eqmain})
& \le &  \sum_{i=1}^{d} \frac{ \sqrt{\hat{v}_{1,i}}}{2\alpha_1(1-\beta_{1,1})}   (x_{1, i} - x^{*}_{,i})^2  \\
&+& \frac{1}{2}\sum_{i=1}^{d} \sum_{t=2}^{T} (x_{t, i} - x^{*}_{,i})^2 \left( \frac{\sqrt{\hat{v}_{t,i}}}{\alpha_{t}(1-\boxed{\beta_{1,t}})} - \frac{\sqrt{\hat{v}_{t-1,i}}}{\alpha_{t-1}(1-\boxed{\beta_{1,t-1}})}  \right), \nonumber
\end{eqnarray}
in which the differences with Reddi et al. \cite{AMSGrad} are highlighted in the boxes, namely, $\boxed{\beta_{1,t}}$ and $\boxed{\beta_{1,t-1}}$ instead of $\beta_1$. 

 We suggest two ways to overcome these differences depending on the setting of $\beta_{1,t} (1\le t \le T)$:
\begin{itemize}
\item {\bf In Section \ref{new_con_AMSGrad}: } If either $\beta_{1,t} \overset{\Delta}{=} \beta_1\lambda^{t-1}$ or $\beta_{1,t} \overset{\Delta}{=}1/t$, $(1\le t \le T)$, where $0\le\beta_1< 1$ and $0<\lambda < 1$, then we give a new convergence theorem for AMSGrad in Section \ref{new_con_AMSGrad}. 
\item {\bf  In Section \ref{new_version}:} If the setting for  $\beta_{1,t} (1\le t \le T)$ is general, as in the statement of Theorem \ref{mainthmAMSGrad}, then we suggest a new (slightly modified) version for AMSGrad in Section \ref{new_version}.
\end{itemize}
\section{New convergence theorem for AMSGrad}\label{new_con_AMSGrad}
When either $\beta_{1,t} \overset{\Delta}{=} \beta_1\lambda^{t-1}$ or $\beta_{1,t} \overset{\Delta}{=}1/t$, $(1\le t \le T)$, where $0\le\beta_1< 1$ and $0<\lambda < 1$, Theorem \ref{mainthmAMSGrad} can be fixed as follows, in which the upper bounds of the regret $R(T)$ are changed.

\begin{thm}[Fixes for Theorem \ref{mainthmAMSGrad}] \label{mainthm_change_beta1} Let $x_t$ and $v_t$ be the sequences obtained from Algorithm \ref{AMSGrad}, $\alpha_t = \frac{\alpha}{\sqrt{t}}$, 
either $\beta_{1,t} = \beta_1\lambda^{t-1}$, where $\lambda \in (0,1)$, or $\beta_{1,t} = \frac{\beta_1}{t}$ for all $t\in[T]$ and $\gamma = \frac{\beta_1}{\sqrt{\beta_2}} \le 1$. Assume that $\mathcal F$ has bounded diameter $D_{\infty}$ and $\lVert{\nabla f_t(x)}\rVert_{\infty} \le G_{\infty}$ for all $t\in [T]$ and $x\in \mathcal F$. For $x_t$ generated using AMSGrad (Algorithm \ref{AMSGrad}), we have the following bound on the regret.
Then, there is some $1\le t_0 \le T$ such that AMSGrad achieves the following guarantee for all $T\ge 1$:
\begin{eqnarray*}
R(T) & \le&  \frac{dD_{\infty}^2G_{\infty}}{2\alpha(1-\beta_1)}\left( \sum_{t=1}^{t_0} \sqrt{t} + \sqrt T\right)+  \frac{d D_{\infty}^2G_{\infty}}{2\alpha(1-\beta_{1})(1-\lambda)^2} + \frac{\alpha\sqrt{ \ln T +1}}{(1-\beta_1)^2\sqrt{1-\beta_2}(1-\gamma)} \sum_{i=1}^{d}\lVert{g_{1:T, i}}\rVert_2,
\end{eqnarray*}
provided $\beta_{1,t} = \beta_1\lambda^{t-1}$, and
\begin{eqnarray*}
R(T) & \le& \frac{dD_{\infty}^2G_{\infty}}{2\alpha(1-\beta_1)}\left( \sum_{t=1}^{t_0} \sqrt{t} + \sqrt T\right)+  \frac{d D_{\infty}^2G_{\infty}\sqrt T}{\alpha(1-\beta_{1})} + \frac{\alpha\sqrt{ \ln T +1}}{(1-\beta_1)^2\sqrt{1-\beta_2}(1-\gamma)} \sum_{i=1}^{d}\lVert{g_{1:T, i}}\rVert_2 ,
\end{eqnarray*}
provided $\beta_{1,t} = \frac{\beta_1}{t}$.
\end{thm}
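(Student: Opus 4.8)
The plan is to invoke Lemma~\ref{prepare_lem} to split $R(T)$ into the three sums (\ref{eqmain}), (\ref{eqsecond}) and (\ref{eqthird}), and to bound each one so that (\ref{eqmain}) yields the first summand $\frac{dD_\infty^2G_\infty}{2\alpha(1-\beta_1)}\bigl(\sum_{t=1}^{t_0}\sqrt t+\sqrt T\bigr)$ of the claimed regret bound, (\ref{eqthird}) yields the middle summand, and (\ref{eqsecond}) yields the final gradient-norm summand. The decisive point, and where I deliberately depart from \cite{AMSGrad}, is that in (\ref{eqmain}) I will \emph{not} replace $\beta_{1,t}$ by $\beta_1$; instead I use the exact identity leading to (\ref{eqtemp2}) and confront the fact that the coefficient $a_{t,i}\overset{\Delta}{=}\frac{\sqrt{\hat v_{t,i}}}{\alpha_t(1-\beta_{1,t})}$ is a product of the nondecreasing factor $\sqrt{\hat v_{t,i}}/\alpha_t$ and the nonincreasing factor $1/(1-\beta_{1,t})$, hence need not be monotone in $t$.

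To bound (\ref{eqmain}), I start from (\ref{eqtemp2}), keep the boundary term at $t=1$, and in the remaining sum bound $(x_{t,i}-x^*_{,i})^2\le D_\infty^2$ only on those $t$ for which $a_{t,i}-a_{t-1,i}\ge 0$, discarding the (nonpositive, hence harmless) contributions of the negative increments. This reduces matters to controlling $\sum_{t=2}^{T}\max(a_{t,i}-a_{t-1,i},0)$ for each $i$. I then choose $t_0$ to be the largest index $t\le T$ for which $a_{t,i}<a_{t-1,i}$ for at least one coordinate $i$ (and $t_0=1$ if there is none), so that for every $t>t_0$ and every $i$ the sequence is nondecreasing. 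The tail $\sum_{t>t_0}(a_{t,i}-a_{t-1,i})$ then telescopes to at most $a_{T,i}\le\frac{G_\infty\sqrt T}{\alpha(1-\beta_1)}$ using $\sqrt{\hat v_{T,i}}\le G_\infty$ and $\beta_{1,T}\le\beta_1$, producing the $\sqrt T$ part; the head is bounded crudely by $\sum_{2\le t\le t_0}a_{t,i}\le\frac{G_\infty}{\alpha(1-\beta_1)}\sum_{t=2}^{t_0}\sqrt t$, which together with the $t=1$ boundary term supplies the $\sum_{t=1}^{t_0}\sqrt t$ part. Multiplying by $\frac{D_\infty^2}{2}$ and summing over the $d$ coordinates gives exactly the first summand.

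The term (\ref{eqthird}) is routine: using $(x_{t,i}-x^*_{,i})^2\le D_\infty^2$, $\sqrt{\hat v_{t-1,i}}\le G_\infty$ and $\alpha_{t-1}^{-1}=\sqrt{t-1}/\alpha$ reduces it to $\frac{D_\infty^2G_\infty}{2\alpha(1-\beta_1)}\sum_{i=1}^d\sum_{t=2}^T\beta_{1,t}\sqrt{t-1}$. When $\beta_{1,t}=\beta_1\lambda^{t-1}$ I bound $\beta_1\sqrt{t-1}\le t$ and apply Lemma~\ref{Taylor} to get the factor $(1-\lambda)^{-2}$; when $\beta_{1,t}=\beta_1/t$ I use $\sqrt{t-1}/t\le t^{-1/2}$ and Lemma~\ref{sqrt} to get $2\sqrt T$, giving the two middle summands in the respective cases. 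Finally, (\ref{eqsecond}) is the term that was already correct in \cite{AMSGrad}: expanding $m_{t,i}$ as a convex-combination sum of past gradients, applying Cauchy--Schwarz (Lemma~\ref{CS}), lower bounding $\sqrt{\hat v_{t,i}}\ge\sqrt{1-\beta_2}\,\beta_2^{(t-j)/2}|g_{j,i}|$ to convert $\beta_1^{t-j}/\sqrt{\hat v_{t,i}}$ into $\gamma^{t-j}$, interchanging the order of summation, summing the geometric series in $\gamma\le1$, and a last Cauchy--Schwarz combined with Lemma~\ref{Harmonic} yields $\frac{\alpha\sqrt{\ln T+1}}{(1-\beta_1)^2\sqrt{1-\beta_2}(1-\gamma)}\sum_{i=1}^d\lVert g_{1:T,i}\rVert_2$.

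The main obstacle is squarely the treatment of (\ref{eqmain}). Because the increments of $a_{t,i}$ can change sign, the naive telescoping of \cite{AMSGrad} is unavailable, and the whole device is to isolate a single threshold $t_0$ that separates a crudely bounded non-monotone head from a genuinely telescoping tail. The delicate bookkeeping is to verify that one uniform $t_0$ serves all $d$ coordinates and that the head contributes precisely $\sum_{t=1}^{t_0}\sqrt t$; note that the two prescribed forms of $\beta_{1,t}$ enter only through (\ref{eqthird}) (both satisfy $\beta_{1,1}=\beta_1$ and $\beta_{1,t}\le\beta_1$, which is all that (\ref{eqmain}) needs), which is why the first summand has the same shape in both cases.
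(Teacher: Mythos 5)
Your decomposition and the mechanics of each bound track the paper's proof closely: Lemma~\ref{prepare_lem} to split $R(T)$ into (\ref{eqmain}), (\ref{eqsecond}), (\ref{eqthird}); a crude head bound plus a telescoping tail for (\ref{eqmain}); Lemma~\ref{vt} with Lemma~\ref{Taylor} (resp.\ Lemma~\ref{sqrt}) for (\ref{eqthird}); and the Lemma~\ref{mainlem} argument for (\ref{eqsecond}). The genuine gap is in how you obtain $t_0$. You define $t_0$ as the largest $t\le T$ at which $a_{t,i}<a_{t-1,i}$ for some coordinate. That $t_0$ depends on $T$ (and on the realized gradients), and nothing in your argument shows it stays bounded as $T$ grows. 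The theorem is only meaningful --- and Corollary~\ref{cor}, which divides the bound by $T$, only follows --- if $t_0$ is a single threshold determined by the hyper-parameters, valid for every horizon $T$. With your definition all you can assert a priori is $t_0\le T$, in which case the head term $\sum_{t=1}^{t_0}\sqrt t$ may be of order $T^{3/2}$ and the claimed inequality becomes weaker than the trivial estimate $R(T)\le T\,dG_\infty D_\infty$.

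This is exactly where your closing parenthetical --- that the two prescribed forms of $\beta_{1,t}$ ``enter only through (\ref{eqthird})'' and that (\ref{eqmain}) needs nothing beyond $\beta_{1,1}=\beta_1$ and $\beta_{1,t}\le\beta_1$ --- goes wrong. Under $\beta_{1,t}\le\beta_1$ alone, decreases of $a_{t,i}$ can occur arbitrarily late: for instance, if $\beta_{1,t}$ oscillates between $\beta_1$ and $\beta_1/2$, then at any large $t$ where $\beta_{1,t}$ steps down and $\hat v_{t,i}=\hat v_{t-1,i}$ (which happens once $\hat v$ stabilizes) one has $\frac{\sqrt t}{1-\beta_{1,t}}<\frac{\sqrt{t-1}}{1-\beta_{1,t-1}}$, hence $a_{t,i}<a_{t-1,i}$, so your $t_0$ marches up with $T$. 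The paper closes precisely this hole with Lemma~\ref{t_0}: for $\beta_{1,t}=\beta_1\lambda^{t-1}$ the relative drop of $1/(1-\beta_{1,t})$ is $O(\lambda^{t-2})$, and for $\beta_{1,t}=\beta_1/t$ it is $O(1/t^2)$, both eventually dominated by the relative growth $\sqrt{t/(t-1)}-1=\Theta(1/t)$, so $\sqrt t/(1-\beta_{1,t})$ --- and therefore $a_{t,i}$, since $\hat v_{t,i}$ is nondecreasing --- is monotone beyond a threshold depending only on $\beta_1$ and $\lambda$. You need this lemma (or an equivalent argument); once it is in place, your data-dependent $t_0$ is dominated by that constant threshold and the rest of your proof goes through unchanged.
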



To prove Theorem \ref{mainthm_change_beta1}, we need the following Lemmas \ref{vt}, \ref{t_0}, and \ref{mainlem}. 

\begin{lem}\label{vt} $\sqrt{\hat{v}_{t}} \le G_{\infty}$.
\end{lem}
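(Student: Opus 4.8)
The plan is to prove the coordinate-wise bound $\hat v_{t,i} \le G_{\infty}^2$ for every $i \in [d]$ and every $t \ge 0$ by induction on $t$; taking square roots then yields the claim, understood componentwise. The key observation is that the recursion $v_t = \beta_2 v_{t-1} + (1-\beta_2)g_t^2$ is a convex combination: since $\beta_2 \in (0,1)$, each $v_{t,i}$ is a weighted average of $v_{t-1,i}$ and $g_{t,i}^2$, so any common upper bound for these two quantities is automatically preserved under the update. This reduces the whole lemma to controlling the two ingredients $v_{t-1,i}$ and $g_{t,i}^2$ separately.

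First I would record that the standing assumption $\lVert \nabla f_t(x) \rVert_{\infty} \le G_{\infty}$ forces $|g_{t,i}| \le G_{\infty}$, and hence $g_{t,i}^2 \le G_{\infty}^2$, for all $t \in [T]$ and all $i \in [d]$. Next I would establish by induction on $t$ that $v_{t,i} \le G_{\infty}^2$. The base case $v_{0,i} = 0 \le G_{\infty}^2$ is immediate from the initialization $v_0 = 0$; and assuming $v_{t-1,i} \le G_{\infty}^2$, the update gives $v_{t,i} = \beta_2 v_{t-1,i} + (1-\beta_2)g_{t,i}^2 \le \beta_2 G_{\infty}^2 + (1-\beta_2)G_{\infty}^2 = G_{\infty}^2$.

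Finally I would handle $\hat v_t$ by a parallel induction using $\hat v_t = \max(\hat v_{t-1}, v_t)$. Since $\hat v_0 = 0 \le G_{\infty}^2$, and since by the inductive hypothesis $\hat v_{t-1,i} \le G_{\infty}^2$ while the previous step already yields $v_{t,i} \le G_{\infty}^2$, the maximum of the two is again at most $G_{\infty}^2$, so $\hat v_{t,i} \le G_{\infty}^2$. Taking square roots coordinate-wise gives $\sqrt{\hat v_t} \le G_{\infty}$.

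There is no serious obstacle here, as the argument is entirely elementary; the only points requiring care are to run the induction \emph{coordinate-wise}, because all the updates in Algorithm \ref{AMSGrad} are element-wise, and to carry the two sequences $v_t$ and $\hat v_t$ through together rather than attempting to bound $\hat v_t$ in isolation, since its bound feeds on the bound just proved for $v_t$.
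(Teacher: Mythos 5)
Your proof is correct, but it takes a genuinely different route from the paper's. You run a coordinate-wise induction: the bound $g_{t,i}^2 \le G_\infty^2$ together with the observation that $v_{t,i} = \beta_2 v_{t-1,i} + (1-\beta_2)g_{t,i}^2$ is a convex combination shows the invariant $v_{t,i} \le G_\infty^2$ is preserved, and a second induction through $\hat v_t = \max(\hat v_{t-1}, v_t)$ propagates it to $\hat v_{t,i}$. The paper instead observes that $\hat v_t = \max\{v_1,\dots,v_t\}$, so $\hat v_t = v_s$ for some $s \le t$, unrolls the recursion to the closed form $v_s = (1-\beta_2)\sum_{k=1}^{s}\beta_2^{s-k}g_k^2$, bounds each $g_k^2$ by $G_\infty^2$, and sums the geometric series via Lemma \ref{Taylor}. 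Your argument is more elementary (it needs no auxiliary lemma and no closed-form expansion) and is cleaner about the fact that all updates are element-wise, a point the paper glosses over since the index $s$ realizing the max can differ per coordinate. What the paper's unrolling buys in exchange is reusability: the same closed-form computation is what powers Lemma \ref{vt2} for AdamX, where the update $\hat v_t = \max\bigl(\tfrac{(1-\beta_{1,t})^2}{(1-\beta_{1,t-1})^2}\hat v_{t-1}, v_t\bigr)$ involves ratio factors that can exceed $1$, so your convex-combination invariant would no longer be preserved step by step and the bound genuinely degrades to $G_\infty/(1-\beta_1)$; the explicit sum handles that case with only minor changes.
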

\begin{proof} From the definition of $\hat{v}_{t}$ in AMSGrad's algorithm, it is implied  that $\hat{v}_{t} = \max\{v_1,...,v_t\}$. Therefore, there is some $1\le s\le t$ such that $\hat{v}_{t} = v_s$. Hence,
\begin{eqnarray*} 
\sqrt{\hat{v}_{t}} & = & \sqrt{{v}_{s}}\\
& = & \sqrt{1-\beta_2}\sqrt{\sum_{k=1}^{s}\beta_2^{s-k}g^2_{k}}\\
&\le&  \sqrt{1-\beta_2}\sqrt{\sum_{k=1}^{s}\beta_2^{s-k} (\max_{1\le k\le s}{|g_{k}|})^2}\\
&=& G_{\infty}\sqrt{1-\beta_2} \sqrt{\sum_{k=1}^{s}\beta_2^{s-k}}\\
& \le & G_{\infty}\sqrt{1-\beta_2} \frac{1}{\sqrt{1-\beta_2}}\\
& = & G_{\infty},
\end{eqnarray*}
where the last inequality is by Lemma \ref{Taylor}.
\end{proof}

\begin{lem}\label{t_0} If either $\beta_{1,t} = \beta_1\lambda^{t-1}$ or $\beta_{1,t} = \beta_1/t$, then there exists some $t_0$ such that for every $t > t_0$,
$$ \frac{\sqrt{t \hat{v}_{t,i}}}{1-\beta_{1,t}} \ge \frac{\sqrt{(t-1)\hat{v}_{t-1,i}}}{1-\beta_{1,t-1}}.$$
\end{lem}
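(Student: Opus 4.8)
The plan is to exploit two monotonicity facts and then reduce everything to a coordinate-free comparison that is settled by a decay-rate argument. First I would record that $\hat v_{t,i}$ is non-decreasing in $t$: since $\hat v_t = \max(\hat v_{t-1},v_t)$ element-wise, we have $\hat v_{t,i}\ge \hat v_{t-1,i}$, hence $\sqrt{\hat v_{t,i}}\ge \sqrt{\hat v_{t-1,i}}$. Consequently
$$\frac{\sqrt{t\,\hat v_{t,i}}}{1-\beta_{1,t}} \ge \frac{\sqrt{t}\,\sqrt{\hat v_{t-1,i}}}{1-\beta_{1,t}},$$
so it suffices to prove $\frac{\sqrt{t}\,\sqrt{\hat v_{t-1,i}}}{1-\beta_{1,t}} \ge \frac{\sqrt{t-1}\,\sqrt{\hat v_{t-1,i}}}{1-\beta_{1,t-1}}$. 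If $\hat v_{t-1,i}=0$ the right-hand side of the target is $0$ and there is nothing to prove, so I may assume $\hat v_{t-1,i}>0$ and cancel the common factor $\sqrt{\hat v_{t-1,i}}$. This reduces the claim to the coordinate-free statement
$$\frac{\sqrt{t}}{1-\beta_{1,t}} \ge \frac{\sqrt{t-1}}{1-\beta_{1,t-1}}.$$

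Both denominators are positive because $\beta_{1,t}\le \beta_1<1$, so cross-multiplying turns the target into
$$\sqrt{t}-\sqrt{t-1} \ge \sqrt{t}\,\beta_{1,t-1}-\sqrt{t-1}\,\beta_{1,t}.$$
For the left-hand side I would use $\sqrt{t}-\sqrt{t-1}=\frac{1}{\sqrt{t}+\sqrt{t-1}}\ge \frac{1}{2\sqrt{t}}$, and then bound the right-hand side from above in each case. The point is that the right-hand side decays strictly faster than $\frac{1}{2\sqrt{t}}$, so the inequality holds for all sufficiently large $t$, which is exactly the threshold $t_0$ in the statement.

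In the geometric case $\beta_{1,t}=\beta_1\lambda^{t-1}$, dropping the nonnegative subtracted term gives a right-hand side $\le \sqrt{t}\,\beta_1\lambda^{t-2}$, so it suffices that $\frac{1}{2\sqrt{t}}\ge \sqrt{t}\,\beta_1\lambda^{t-2}$, i.e. $\lambda^{-(t-2)}\ge 2\beta_1 t$; since $\lambda\in(0,1)$ the left side grows exponentially while the right side is linear in $t$, so such a $t_0$ exists. In the harmonic case $\beta_{1,t}=\beta_1/t$, I would write the right-hand side as $\beta_1\big(\frac{\sqrt{t}}{t-1}-\frac{\sqrt{t-1}}{t}\big)=\beta_1\frac{t^{3/2}-(t-1)^{3/2}}{t(t-1)}$ and apply the mean value theorem to $x\mapsto x^{3/2}$ to get $t^{3/2}-(t-1)^{3/2}\le \frac{3}{2}\sqrt{t}$, hence the right-hand side is $\le \frac{3\beta_1}{2\sqrt{t}\,(t-1)}$; comparing with $\frac{1}{2\sqrt{t}}$ reduces the claim to $t-1\ge 3\beta_1$, which holds for all $t\ge 4$ since $\beta_1<1$, so here $t_0$ may be taken to be an explicit small constant.

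The main obstacle is the comparison of decay rates that produces $t_0$: once the problem is reduced to the coordinate-free inequality, all the content lies in showing that the perturbation coming from the time-varying parameter $\beta_{1,t}$ (exponentially small, or $O(t^{-3/2})$) is eventually dominated by the $\Theta(t^{-1/2})$ gap $\sqrt{t}-\sqrt{t-1}$. The monotonicity and cancellation steps are routine; verifying the exponential-beats-linear threshold in the geometric case is the only place requiring genuine (if standard) care, while the harmonic case is quantitatively clean.
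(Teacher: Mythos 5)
Your proposal is correct and follows essentially the same route as the paper: both use the monotonicity $\hat v_{t,i}\ge \hat v_{t-1,i}$ to reduce the claim to the scalar inequality $\frac{\sqrt{t}}{1-\beta_{1,t}} \ge \frac{\sqrt{t-1}}{1-\beta_{1,t-1}}$, and then settle it by showing the perturbation from the time-varying $\beta_{1,t}$ (exponentially small, resp.\ $O(t^{-3/2})$) is eventually dominated by the gap $\sqrt{t}-\sqrt{t-1}=\Theta(t^{-1/2})$. Your cross-multiplied version is in fact somewhat sharper in execution than the paper's, which only asserts asymptotically that $1-O(\lambda^{t-2})$ and $1-O(1/t^2)$ eventually exceed $\sqrt{1-1/t}$, whereas you extract explicit thresholds (e.g.\ $t\ge 4$ in the harmonic case).
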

\begin{proof}
Since $\hat{v}_{t,i}\ge \hat{v}_{t-1,i}$, it is sufficient  to prove that there exists some $t_0$ such that for every $t > t_0$,
\begin{eqnarray*}
 \frac{\sqrt{t}}{1-\beta_{1,t}} &\ge &\frac{\sqrt{t-1}}{1-\beta_{1,t-1}}.
\end{eqnarray*}
In other word, 
\begin{eqnarray}\label{eqclaim}
 1- \frac{\beta_{1,t-1}-\beta_{1,t}}{1-\beta_{1,t}} &\ge &\sqrt{1-\frac{1}{t}}.
\end{eqnarray}

When $\beta_{1,t} = \beta_1/t$, from (\ref{eqclaim}) we have
\begin{eqnarray} \label{eqclaim2}
 1- \frac{\beta_1}{(t-1)(t-\beta_1)} &\ge &\sqrt{1-\frac{1}{t}}.
\end{eqnarray}

When $\beta_{1,t} = \beta_1\lambda^{t-1}$, (\ref{eqclaim}) have the following form
\begin{eqnarray} \label{eqclaim3}
  1- \frac{(1- \lambda) \beta_1\lambda^{t-2}}{1-\beta_1\lambda^{t-1}} = \frac{1-\beta_1 \lambda^{t-2}}{1-\beta_1 \lambda^{t-1}}& \ge& \sqrt{1 - \frac{1}{t}}.
\end{eqnarray}
Since $\beta_1$ and $\lambda$ are smaller than $1$,  it is easy to see that when $t$ is sufficiently large, meaning that $t > t_0$ for some $t_0$,  the left-hand side of (\ref{eqclaim2}) is $1 - O(1/t^2)$ and the left-hand side of (\ref{eqclaim3}) is larger than $1 - \beta_1 \lambda^{t-2} = 1 - O(\lambda^{t-2})$. Therefore, (\ref{eqclaim2}) and (\ref{eqclaim3}) hold  when $t$ is sufficiently large.
\end{proof}
\begin{lem} \label{mainlem} For the parameter settings and conditions assumed in Theorem \ref{mainthm_change_beta1}, we have
$$\sum_{t=1}^T\frac{{m}^2_{t,i}}{\sqrt{t\hat{v}_{t,i}}} \le \frac{\sqrt{ \ln T +1} }{(1-\beta_1)\sqrt{1-\beta_2}(1-\gamma)}\lVert{g_{1:T, i}}\rVert_2 .$$
\end{lem}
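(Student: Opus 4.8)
The plan is to dominate the momentum term $m_{t,i}^2$ by an exponentially weighted sum of past squared gradients, then use the lower bound $\hat v_{t,i}\ge v_{t,i}$ to convert the ratio $m_{t,i}^2/\sqrt{\hat v_{t,i}}$ into a geometric combination of the $|g_{k,i}|$ with ratio $\gamma=\beta_1/\sqrt{\beta_2}$, and finally apply the Cauchy--Schwarz inequality (Lemma \ref{CS}) twice, together with the harmonic-series bound (Lemma \ref{Harmonic}), to collapse the whole double sum onto $\lVert{g_{1:T,i}}\rVert_2$. I would stress at the outset that the argument uses only the inequality $\beta_{1,t}\le\beta_1$, which holds for both admissible choices $\beta_{1,t}=\beta_1\lambda^{t-1}$ and $\beta_{1,t}=\beta_1/t$, so the two cases of Theorem \ref{mainthm_change_beta1} are handled uniformly; it also tacitly needs $\gamma<1$ for the geometric sums to converge.

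First I would unroll the recursion $m_t=\beta_{1,t}m_{t-1}+(1-\beta_{1,t})g_t$ to obtain $m_{t,i}=\sum_{k=1}^t\bigl(\prod_{j=k+1}^t\beta_{1,j}\bigr)(1-\beta_{1,k})g_{k,i}$. Bounding $\prod_{j=k+1}^t\beta_{1,j}\le\beta_1^{t-k}$ and $1-\beta_{1,k}\le1$, then splitting $\beta_1^{t-k}=\beta_1^{(t-k)/2}\beta_1^{(t-k)/2}$ and applying Cauchy--Schwarz, and using $\sum_{k=1}^t\beta_1^{t-k}\le\frac{1}{1-\beta_1}$ from Lemma \ref{Taylor}, I get
$$m_{t,i}^2\le\frac{1}{1-\beta_1}\sum_{k=1}^t\beta_1^{t-k}g_{k,i}^2.$$
Next I would exploit $\hat v_{t,i}\ge v_{t,i}=(1-\beta_2)\sum_{j=1}^t\beta_2^{t-j}g_{j,i}^2\ge(1-\beta_2)\beta_2^{t-k}g_{k,i}^2$ for each fixed $k\le t$, so that $\sqrt{\hat v_{t,i}}\ge\sqrt{1-\beta_2}\,\beta_2^{(t-k)/2}|g_{k,i}|$. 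Dividing the previous display by $\sqrt{t\hat v_{t,i}}$ term by term yields
$$\frac{m_{t,i}^2}{\sqrt{t\hat v_{t,i}}}\le\frac{1}{(1-\beta_1)\sqrt{1-\beta_2}\,\sqrt{t}}\sum_{k=1}^t\gamma^{t-k}|g_{k,i}|.$$

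Summing over $t$, I would apply Cauchy--Schwarz to the inner sum (again splitting $\gamma^{t-k}$, with $\sum_{k=1}^t\gamma^{t-k}\le\frac{1}{1-\gamma}$) and then to the outer sum over $t$; the harmonic factor $\sum_{t=1}^T\frac1t\le\ln T+1$ from Lemma \ref{Harmonic} appears, and after swapping the order of summation $\sum_{t=1}^T\sum_{k=1}^t\gamma^{t-k}g_{k,i}^2=\sum_{k=1}^Tg_{k,i}^2\sum_{t=k}^T\gamma^{t-k}\le\frac{1}{1-\gamma}\lVert{g_{1:T,i}}\rVert_2^2$, the two factors $(1-\gamma)^{-1/2}$ combine to give exactly the claimed constant $\frac{\sqrt{\ln T+1}}{(1-\beta_1)\sqrt{1-\beta_2}(1-\gamma)}$. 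The computation is entirely elementary once these reductions are set up. The main subtlety lies in the very first step—dominating the product $\prod_{j=k+1}^t\beta_{1,j}$ by $\beta_1^{t-k}$ via $\beta_{1,t}\le\beta_1$—which is precisely where the hyper-parameter ordering enters and which makes the lemma insensitive to the exact form of $\beta_{1,t}$; the remaining care is purely bookkeeping, namely keeping the two Cauchy--Schwarz applications and the summation swap separate so that exactly one factor $\sqrt{\ln T+1}$ and the power $(1-\gamma)^{-1}$ (rather than $(1-\gamma)^{-2}$) are produced.
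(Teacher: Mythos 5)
Your proposal is correct and takes essentially the same approach as the paper: both proofs unroll $m_t$, combine Cauchy--Schwarz (Lemma \ref{CS}) with the geometric series bound (Lemma \ref{Taylor}) to get $m_{t,i}^2 \le \frac{1}{1-\beta_1}\sum_{k=1}^t \beta_1^{t-k}g_{k,i}^2$, lower-bound the denominator term by term via $\hat v_{t,i}\ge v_{t,i}\ge (1-\beta_2)\beta_2^{t-k}g_{k,i}^2$ (the paper phrases this same step through Lemma \ref{sum}) to reach $\frac{m_{t,i}^2}{\sqrt{t\hat v_{t,i}}}\le \frac{1}{(1-\beta_1)\sqrt{1-\beta_2}\sqrt t}\sum_{k=1}^t\gamma^{t-k}|g_{k,i}|$, and then collapse the double sum using a summation swap, the geometric series, and the harmonic bound (Lemma \ref{Harmonic}). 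The only difference is bookkeeping in the final collapse --- you apply Cauchy--Schwarz twice (inner sum, then outer sum) and swap the order of summation inside the resulting square root, whereas the paper swaps first and applies Cauchy--Schwarz once --- and both orderings yield the identical constant $\frac{\sqrt{\ln T+1}}{(1-\beta_1)\sqrt{1-\beta_2}(1-\gamma)}$.
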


\begin{proof} The proof is almost identical to that of \cite[Lemma 2]{AMSGrad}. Since for all $ t\ge 1$, $\hat{v}_{t,i} \ge v_{t,i}$, we have
\begin{eqnarray*}
\frac{{m}^2_{t,i}}{\sqrt{t\hat{v}_{t,i}}} & \le& \frac{{m}^2_{t,i}}{\sqrt{t {v}_{t,i}}} \\
& = & \frac{[\sum_{k=1}^{t}(1-\beta_{1,k})(\prod_{j=k+1}^{t}\beta_{1,j})g_{k,i}]^2}{\sqrt{(1-\beta_2)t\sum_{k=1}^t\beta_2^{t-k}g^2_{k,i}}}\\
& \le & \frac{\left(\sum_{k=1}^{t}(1-\beta_{1,k})^2(\prod_{j=k+1}^{t}\beta_{1,j})\right)\left(\sum_{k=1}^{t}(\prod_{j=k+1}^{t}\beta_{1,j})g_{k,i}^2\right)}{\sqrt{(1-\beta_2)t\sum_{k=1}^t\beta_2^{t-k}g^2_{k,i}}}\\
& \le & \frac{\left(\sum_{k=1}^{t}\beta_{1}^{t-k}\right)\left(\sum_{k=1}^{t}\beta_{1}^{t-k}g_{k,i}^2\right)}{\sqrt{(1-\beta_2)t\sum_{k=1}^t\beta_2^{t-k}g^2_{k,i}}}\\
& \le & \frac{1}{(1-\beta_1)\sqrt{1-\beta_2}} \frac{\sum_{k=1}^t\beta_{1}^{t-k}g_{k,i}^2}{\sqrt{t\sum_{k=1}^t\beta_2^{t-k}g^2_{k,i}}},
\end{eqnarray*}
where the second inequality is by Lemma \ref{CS}, the third inequality is from the properties of $\beta_{1,k} \le 1$ and $\beta_{1,k} \le \beta_1$ for all $1\le k\le T$, and the fourth inequality is obtained  by applying Lemma \ref{Taylor} to $\sum_{k=1}^{t}\beta_{1}^{t-k}$.
Therefore,

\begin{eqnarray*}
\frac{{m}^2_{t,i}}{\sqrt{t\hat{v}_{t,i}}} 
& \le & \frac{1}{(1-\beta_{1})\sqrt{1-\beta_2}\sqrt t} \frac{\sum_{k=1}^t\beta_{1}^{t-k}g_{k,i}^2}{\sqrt{\sum_{k=1}^t\beta_2^{t-k}g^2_{k,i}}}\\
& \le & \frac{1}{(1-\beta_{1})\sqrt{1-\beta_2}\sqrt t}\sum_{k=1}^t\frac{\beta_{1}^{t-k}g_{k,i}^2}{\sqrt{\beta_2^{t-k}g^2_{k,i}}}\\
& \le & \frac{1}{(1-\beta_{1}) \sqrt{1-\beta_2}\sqrt t}\sum_{k=1}^t\frac{\beta_{1}^{t-k}}{\sqrt{\beta_2^{t-k}}} |{g_{k,i}}|\\
& = & \frac{1}{(1-\beta_{1}) \sqrt{1-\beta_2}\sqrt t}\sum_{k=1}^t\gamma^{t-k} |{g_{k,i}}|,
\end{eqnarray*}
where the second inequality is by Lemma \ref{sum}.
Therefore 
\begin{eqnarray} \label{eq4}
\sum_{t=1}^T\frac{{m}^2_{t,i}}{\sqrt{t\hat{v}_{t,i}}} 
&\le& \frac{1}{(1-\beta_1)\sqrt{1-\beta_2}} \sum_{t=1}^T\frac{1}{\sqrt{t}}\sum_{k=1}^t{\gamma}^{t-k} |{g_{k,i}}|.
\end{eqnarray}
It is sufficient  to consider $\sum_{t=1}^T \frac{1}{\sqrt{t}} \sum_{k=1}^t \gamma^{t-k}|{g_{k, i}}|$. Firstly, $\sum_{t=1}^T \frac{1}{\sqrt{t}} \sum_{k=1}^t \gamma^{t-k}|{g_{k, i}}|$ can be expanded as

\begin{eqnarray*}
\sum_{t=1}^T \frac{1}{\sqrt{t}} \sum_{k=1}^t\gamma^{t-k}|g_{k, i}| & = & \gamma^0|g_{1, i}| \\
& & + \frac{1}{\sqrt 2} \left[\gamma^1|g_{1, i}| + \gamma^0|{g_{2, i}}| \right]\\
& & + \frac{1}{\sqrt 3} \left[\gamma^2|g_{1, i}| + \gamma^1|{g_{2, i}}| + \gamma^0|{g_{3, i}}|\right]\\
& & +\cdots \\
& & + \frac{1}{\sqrt T} \left[\gamma^{T-1}|g_{1, i}| + \gamma^{T-2}|{g_{2, i}}| +...+ \gamma^0|{g_{T, i}}|\right].
\end{eqnarray*}

Changing the role of $|g_{1,i}|$ as the common factor, we obtain

\begin{eqnarray*}
\sum_{t=1}^T \frac{1}{\sqrt{t}} \sum_{k=1}^t \gamma^{t-k}|g_{k, i}| 
& = & |g_{1, i}| (\gamma^0 + \frac{1}{\sqrt 2}\gamma^1 + \frac{1}{\sqrt 3}\gamma^2 + ... + \frac{1}{\sqrt T}\gamma^{T-1}) \\
& & + |{g_{2, i}}| (\frac{1}{\sqrt 2}\gamma^0 + \frac{1}{\sqrt 3}\gamma^1 + ... + \frac{1}{\sqrt {T}}\gamma^{T-2})\\
& & + |{g_{3, i}}| (\frac{1}{\sqrt 3}\gamma^0 + \frac{1}{\sqrt 4}\gamma^1 + ... + \frac{1}{\sqrt {T}}\gamma^{T-3})\\
& & + \cdots\\
& & + |{g_{T, i}}| \frac{1}{\sqrt T}\gamma^0.
\end{eqnarray*}
In other words, 
$$\sum_{t=1}^T \frac{1}{\sqrt{t}} \sum_{k=1}^t \gamma^{t-k}|g_{k, i}|  = \sum_{t=1}^T |{g_{t, i}}| \sum_{k=t}^{T}\frac{1}{\sqrt k}\gamma^{k-t} $$
Moreover, since 
$$ \sum_{k=t}^{T}\frac{1}{\sqrt k}\gamma^{k-t} \le \sum_{k=t}^{T}\frac{1}{\sqrt t}\gamma^{k-t} = \frac{1}{\sqrt t}\sum_{k=t}^{T}\gamma^{k-t} = \frac{1}{\sqrt t}\sum_{k=0}^{T-t}\gamma^{k} \le \frac{1}{\sqrt t}\left( \frac{1}{1-\gamma}\right),$$
where the last inequality is by Lemma \ref{Taylor}, we obtain 
$$\sum_{t=1}^T \frac{1}{\sqrt{t}} \sum_{k=1}^t \gamma^{t-k}|g_{k, i}| \le  \sum_{t=1}^T|g_{t, i}| \frac{1}{\sqrt t}\left( \frac{1}{1-\gamma}\right) =  \frac{1}{1-\gamma} \sum_{t=1}^T \frac{1}{\sqrt t} |{g_{t, i}}| .$$
Furthermore, since 
$$ \sum_{t=1}^T \frac{1}{\sqrt t} |{g_{t, i}}| = \sqrt{\left(\sum_{t=1}^T \frac{1}{\sqrt t} |{g_{t, i}}|\right)^2} \le \sqrt{\sum_{t=1}^T \frac{1}{t}} \sqrt{\sum_{t=1}^Tg_{t, i}^2} \le (\sqrt{\ln T +1})\lVert{g_{1:T, i}}\rVert_2,$$
where the first inequality is by Lemma \ref{CS} and the last inequality is by Lemma \ref{Harmonic}, we obtain 
$$\sum_{t=1}^T \frac{1}{\sqrt{t}} \sum_{k=1}^t \gamma^{t-k}|g_{k, i}| \le \frac{\sqrt{\ln T +1}}{1-\gamma} \lVert{g_{1:T, i}}\rVert_2 . $$
Hence, by (\ref{eq4}),
$$\sum_{t=1}^T\frac{{m}^2_{t,i}}{\sqrt{t\hat{v}_{t,i}}} \le \frac{\sqrt{ \ln T + 1} }{(1-\beta_1)\sqrt{1-\beta_2}(1-\gamma)}\lVert{g_{1:T, i}}\rVert_2, $$
which ends the proof.
\end{proof}
Let us now prove Theorem \ref{mainthm_change_beta1}. 
\begin{proof}[{\bf Proof of Theorem \ref{mainthm_change_beta1}}] To prove Theorem \ref{mainthm_change_beta1}, by Lemma \ref{prepare_lem}, we need to bound the terms (\ref{eqmain}), (\ref{eqsecond}), and (\ref{eqthird}). First, we consider (\ref{eqsecond}). We have
\begin{eqnarray}\label{eqsecond2}
\sum_{i=1}^{d} \sum_{t=1}^{T} \frac{\alpha_t}{1-\beta_{1}} \frac{ m_{t,i}^2}{\sqrt{\hat v_{t,i}}} 
& = & \frac{\alpha}{1-\beta_{1}}\sum_{i=1}^{d} \sum_{t=1}^{T} \frac{ m_{t,i}^2}{\sqrt{t\hat v_{t,i}}}\\
& \le &\frac{\alpha\sqrt{ \ln T +1}}{(1-\beta_1)^2\sqrt{1-\beta_2}(1-\gamma)} \sum_{i=1}^{d}\lVert{g_{1:T, i}}\rVert_2\nonumber,
\end{eqnarray}
where the equality is by the assumption that $\alpha_t = \alpha/\sqrt t$ and the last inequality is by Lemma \ref{mainlem}. Next, we consider (\ref{eqthird}). The bound for (\ref{eqthird}) depends on either $\beta_{1,t} = \beta_1\lambda^{t-1} (0<\lambda<1 )$ or $\beta_{1,t} = \frac{\beta_1}{t}$. Recall that by assumption, $\lVert{x_m-x_n}\rVert_{\infty} \le D_{\infty}$ for any $m,n\in \{1,...,T\}$, $\alpha_t = \alpha/\sqrt{t}$.
If $\beta_{1,t} = \beta_1\lambda^{t-1} (0<\lambda<1 )$, then,
\begin{eqnarray}\label{beta_1lambda^{t-1}}
\sum_{i=1}^{d} \sum_{t=2}^{T}\frac{\beta_{1,t}\sqrt{\hat{v}_{t-1,i}}}{2\alpha_{t-1}(1-\beta_{1})}  (x_{t,i} - x^{*}_{,i})^2
& = & \sum_{i=1}^{d} \sum_{t=2}^{T}\frac{\beta_1\lambda^{t-1}\sqrt{(t-1)}\sqrt{\hat{v}_{t-1,i}} }{2\alpha(1-\beta_{1})}(x_{t,i} - x^{*}_{,i})^2\\
& \le & \frac{D_{\infty}^2G_{\infty}}{2\alpha(1-\beta_{1})} \sum_{i=1}^{d} \sum_{t=2}^{T}\sqrt{(t-1)} \lambda^{t-1}\nonumber\\
& \le & \frac{D_{\infty}^2G_{\infty}}{2\alpha(1-\beta_{1})} \sum_{i=1}^{d} \sum_{t=2}^{T}t \lambda^{t-1}\nonumber\\
& \le & \frac{D_{\infty}^2G_{\infty}}{2\alpha(1-\beta_{1})} \sum_{i=1}^{d} \frac{1}{(1-\lambda)^2}\nonumber\\
& = & \frac{d D_{\infty}^2G_{\infty}}{2\alpha(1-\beta_{1})(1-\lambda)^2}\nonumber,
\end{eqnarray}
where the first inequality is from Lemma \ref{vt} and the assumption that $\beta_1\le 1$, the last inequality is by Lemma \ref{Taylor}. 
If $\beta_{1,t} = \frac{\beta_1}{t}$, then, 
\begin{eqnarray}\label{frac{beta_1}{t}}
\sum_{i=1}^{d} \sum_{t=2}^{T}\frac{\beta_{1,t}\sqrt{\hat{v}_{t-1,i}}}{2\alpha_{t-1}(1-\beta_{1})}  (x_{t,i} - x^{*}_{,i})^2
& = & \sum_{i=1}^{d} \sum_{t=2}^{T}\frac{\beta_1\sqrt{(t-1)}\sqrt{\hat{v}_{t-1,i}} }{2\alpha(1-\beta_{1})t}(x_{t,i} - x^{*}_{,i})^2\\
& \le & \frac{D_{\infty}^2G_{\infty}}{2\alpha(1-\beta_{1})} \sum_{i=1}^{d} \sum_{t=2}^{T}\frac{\sqrt{(t-1)}}{t}\nonumber\\
& \le & \frac{D_{\infty}^2G_{\infty}}{2\alpha(1-\beta_{1})} \sum_{i=1}^{d} \sum_{t=2}^{T}\frac{1}{\sqrt{t}}\nonumber \\
& = & \frac{d D_{\infty}^2G_{\infty}\sqrt T}{\alpha(1-\beta_{1})}\nonumber,
\end{eqnarray}
where the first inequality is from Lemma \ref{vt} and the assumption that $\beta_1\le 1$, and the last inequality is by Lemma \ref{sqrt}. 

Finally, we will bound (\ref{eqmain}). From the inequality (\ref{eqtemp2}) and replacing $\alpha_t $ with $\frac{\alpha}{\sqrt t} (1\le t\le T)$, we obtain 
\begin{eqnarray*}
 (\ref {eqmain}) 
 & \le &  \sum_{i=1}^{d} \frac{ \sqrt{\hat{v}_{1,i}}}{2\alpha(1-\beta_{1})}   (x_{1, i} - x^{*}_{,i})^2  + 
 \frac{1}{2\alpha}\sum_{i=1}^{d} \sum_{t=2}^{T} (x_{t, i} - x^{*}_{,i})^2 \left( \frac{\sqrt{t \hat{v}_{t,i}}}{1-\beta_{1,t}} - \frac{\sqrt{(t-1)\hat{v}_{t-1,i}}}{1-\beta_{1,t-1}}  \right).
\end{eqnarray*}
By Lemma \ref{t_0}, there is some $t_0 (1\le t_0 \le T)$ such that  $\frac{\sqrt{t \hat{v}_{t,i}}}{1-\beta_{1,t}} \ge \frac{\sqrt{(t-1)\hat{v}_{t-1,i}}}{1-\beta_{1,t-1}}$ for all $t>t_0$. Therefore,
\begin{eqnarray*}
 (\ref {eqmain}) 
 & \le &  
 \sum_{i=1}^{d} \frac{ \sqrt{\hat{v}_{1,i}}}{2\alpha_1(1-\beta_{1,1})}   (x_{1, i} - x^{*}_{,i})^2 \\
 & & + \frac{1}{2\alpha}\sum_{i=1}^{d} \sum_{t=2}^{t_0} (x_{t, i} - x^{*}_{,i})^2 \left( \frac{\sqrt{t \hat{v}_{t,i}}}{1-\beta_{1,t}} - \frac{\sqrt{(t-1)\hat{v}_{t-1,i}}}{1-\beta_{1,t-1}}  \right)\\
 &  & + \frac{1}{2\alpha}\sum_{i=1}^{d} \sum_{t=t_0+1}^{T} (x_{t, i} - x^{*}_{,i})^2 \left( \frac{\sqrt{t \hat{v}_{t,i}}}{1-\beta_{1,t}} - \frac{\sqrt{(t-1)\hat{v}_{t-1,i}}}{1-\beta_{1,t-1}}  \right)\\
 & \le &  
 \frac{D_{\infty}^2}{2\alpha}\sum_{i=1}^{d} \frac{ \sqrt{\hat{v}_{1,i}}}{1-\beta_{1,1}} \\
 & & + \frac{1}{2\alpha}\sum_{i=1}^{d} \sum_{t=2}^{t_0} (x_{t, i} - x^{*}_{,i})^2 \left( \frac{\sqrt{t \hat{v}_{t,i}}}{1-\beta_{1,t}} - \frac{\sqrt{(t-1)\hat{v}_{t-1,i}}}{1-\beta_{1,t-1}}  \right)\\
 &  & + \frac{D_{\infty}^2}{2\alpha}\sum_{i=1}^{d} \sum_{t=t_0+1}^{T}  \left( \frac{\sqrt{t \hat{v}_{t,i}}}{1-\beta_{1,t}} - \frac{\sqrt{(t-1)\hat{v}_{t-1,i}}}{1-\beta_{1,t-1}}  \right).
 \end{eqnarray*}
 Since 
 \begin{eqnarray*}
 \frac{D_{\infty}^2}{2\alpha}\sum_{i=1}^{d} \sum_{t=t_0+1}^{T}  \left( \frac{\sqrt{t \hat{v}_{t,i}}}{1-\beta_{1,t}} - \frac{\sqrt{(t-1)\hat{v}_{t-1,i}}}{1-\beta_{1,t-1}}  \right)
 &=& 
 \frac{D_{\infty}^2}{2\alpha}\sum_{i=1}^{d} \frac{ \sqrt{T\hat{v}_{T,i}}}{1-\beta_{1, T}}  - \frac{D_{\infty}^2}{2\alpha}\sum_{i=1}^{d} \frac{ \sqrt{t_0\hat{v}_{t_0,i}}}{1-\beta_{1, t_0}}\\
 &\le& \frac{D_{\infty}^2}{2\alpha}\sum_{i=1}^{d} \frac{ \sqrt{T\hat{v}_{T,i}}}{1-\beta_{1, T}},
 \end{eqnarray*}
 we have
 \begin{eqnarray}\label{eqmain2}
  (\ref{eqmain})& \le & 
  \frac{D_{\infty}^2}{2\alpha}\sum_{i=1}^{d} \frac{ \sqrt{\hat{v}_{1,i}}}{1-\beta_{1,1}} + \frac{D_{\infty}^2}{2\alpha}\sum_{i=1}^{d}\frac{ \sqrt{T\hat{v}_{T,i}}}{1-\beta_{1, T}} \\
 & & + \frac{1}{2\alpha}\sum_{i=1}^{d} \sum_{t=2}^{t_0} (x_{t, i} - x^{*}_{,i})^2 \left( \frac{\sqrt{t \hat{v}_{t,i}}}{1-\beta_{1,t}} - \frac{\sqrt{(t-1)\hat{v}_{t-1,i}}}{1-\beta_{1,t-1}}  \right)\nonumber\\
 & \le &  
\frac{D_{\infty}^2}{2\alpha}\sum_{i=1}^{d} \frac{ \sqrt{\hat{v}_{1,i}}}{1-\beta_{1,1}} + \frac{D_{\infty}^2}{2\alpha}\sum_{i=1}^{d}\frac{ \sqrt{T\hat{v}_{T,i}}}{1-\beta_{1, T}}
 + \frac{D_{\infty}^2}{2\alpha}\sum_{i=1}^{d} \sum_{t=2}^{t_0} \frac{\sqrt{t \hat{v}_{t,i}}}{1-\beta_{1,t}}\nonumber\\
 & \le &\frac{dD_{\infty}^2G_{\infty}}{2\alpha(1-\beta_1)}\left(  \sum_{t=1}^{t_0} \sqrt{t}+ \sqrt T\right)\nonumber,
\end{eqnarray}
where the second  inequality is obtained by omitting the term $\frac{1}{2\alpha}\sum_{i=1}^{d} \sum_{t=2}^{t_0} (x_{t, i} - x^{*}_{,i})^2  \frac{\sqrt{(t-1)\hat{v}_{t-1,i}}}{1-\beta_{1,t-1}}$,
and the last inequality is by Lemma \ref{vt} and the assumption that $\beta_{1,t}\le  \beta_1 (1\le t\le T)$. Summing up, if $\beta_{1,t} = \beta_1\lambda^{t-1}$, then, from (\ref{eqsecond2}), (\ref{beta_1lambda^{t-1}}), and (\ref{eqmain2}), we obtain
\begin{eqnarray*}
R(T) & \le&   \frac{dD_{\infty}^2G_{\infty}}{2\alpha(1-\beta_1)}\left( \sum_{t=1}^{t_0} \sqrt{t} + \sqrt T\right)+  \frac{d D_{\infty}^2G_{\infty}}{2\alpha(1-\beta_{1})(1-\lambda)^2} + \frac{\alpha\sqrt{ \ln T +1}}{(1-\beta_1)^2\sqrt{1-\beta_2}(1-\gamma)} \sum_{i=1}^{d}\lVert{g_{1:T, i}}\rVert_2 .
\end{eqnarray*}
If $\beta_{1,t} = \frac{\beta_1}{t}$, then, from from (\ref{eqsecond2}), (\ref{frac{beta_1}{t}}), and (\ref{eqmain2}), we obtain
\begin{eqnarray*}
R(T) & \le&   \frac{dD_{\infty}^2G_{\infty}}{2\alpha(1-\beta_1)}\left( \sum_{t=1}^{t_0} \sqrt{t} + \sqrt T\right)+  \frac{d D_{\infty}^2G_{\infty}\sqrt T}{\alpha(1-\beta_{1})} + \frac{\alpha\sqrt{ \ln T +1}}{(1-\beta_1)^2\sqrt{1-\beta_2}(1-\gamma)} \sum_{i=1}^{d}\lVert{g_{1:T, i}}\rVert_2 ,
\end{eqnarray*}
which ends the proof.
\end{proof}

The following corollary shows that, when either $\beta_{1,t} = \beta_1\lambda^{t-1}$ or $\beta_{1,t}= 1/t$, $(1\le t \le T)$, where $0\le\beta_1< 1$ and $0<\lambda < 1$, the average regret of AMSGrad converges.
\begin{cor}\label{cor} With the same assumption as in Theorem \ref{mainthm_change_beta1},  AMSGrad achieves the following guarantee:
$$\lim_{T\to \infty} \frac{R(T)}{T} =  0.$$
\end{cor}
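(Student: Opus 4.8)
The plan is to divide each of the two regret bounds supplied by Theorem \ref{mainthm_change_beta1} by $T$ and argue that every summand tends to $0$ as $T \to \infty$. The crucial preliminary remark is that the threshold $t_0$ produced by Lemma \ref{t_0} depends only on $\beta_1$ and $\lambda$ (respectively on $\beta_1$ alone), and \emph{not} on $T$; consequently $\sum_{t=1}^{t_0}\sqrt{t}$ is a fixed constant. Once this is noted, I would treat the three groups of terms separately and in both cases $\beta_{1,t}=\beta_1\lambda^{t-1}$ and $\beta_{1,t}=\beta_1/t$ simultaneously.

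First I would dispose of the first bracket $\frac{dD_\infty^2 G_\infty}{2\alpha(1-\beta_1)}\bigl(\sum_{t=1}^{t_0}\sqrt{t}+\sqrt{T}\bigr)$. Dividing by $T$ and using that $\sum_{t=1}^{t_0}\sqrt{t}$ is constant, this contributes a quantity of order $1/T + \sqrt{T}/T = 1/T + 1/\sqrt{T}$, which tends to $0$. Next, for the second term: in the geometric case it is the constant $\frac{dD_\infty^2 G_\infty}{2\alpha(1-\beta_1)(1-\lambda)^2}$, so dividing by $T$ gives something vanishing; in the harmonic case it is $\frac{dD_\infty^2 G_\infty\sqrt{T}}{\alpha(1-\beta_1)}$, and dividing by $T$ again yields $O(1/\sqrt{T}) \to 0$.

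The main work lies in the last term. Here I would invoke the uniform gradient bound $\lVert \nabla f_t(x)\rVert_\infty \le G_\infty$, so that $|g_{t,i}|\le G_\infty$ and hence $\lVert g_{1:T,i}\rVert_2 = \sqrt{\sum_{t=1}^T g_{t,i}^2} \le G_\infty\sqrt{T}$, giving $\sum_{i=1}^d \lVert g_{1:T,i}\rVert_2 \le dG_\infty\sqrt{T}$. Substituting this into the third term and dividing by $T$, one obtains a bound of the form $\frac{\alpha dG_\infty}{(1-\beta_1)^2\sqrt{1-\beta_2}(1-\gamma)}\cdot\frac{\sqrt{\ln T+1}\,\sqrt{T}}{T} = C\sqrt{\tfrac{\ln T+1}{T}}$, which tends to $0$ because $\ln T / T \to 0$. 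Adding the three estimates in each case shows $R(T)/T \to 0$.

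The only genuinely nontrivial step is this last estimate, where the gradient bound must be combined with the sublinear growth of $\sqrt{\ln T}$ relative to $\sqrt{T}$; the remaining steps are immediate once $t_0$ is recognized as a constant independent of $T$. I therefore expect the proof to be short, essentially a term-by-term limit computation applied to the two displayed bounds.
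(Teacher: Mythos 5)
Your proposal is correct and follows essentially the same route as the paper: the paper's own proof consists precisely of invoking Theorem \ref{mainthm_change_beta1} together with the bound $\sum_{i=1}^{d}\lVert g_{1:T,i}\rVert_2 \le dG_{\infty}\sqrt{T}$ (from $\lVert \nabla f_t(x)\rVert_{\infty}\le G_{\infty}$), leaving the term-by-term division by $T$ implicit. The only detail you make explicit that the paper glosses over is that the $t_0$ supplied by Lemma \ref{t_0} is independent of $T$, so $\sum_{t=1}^{t_0}\sqrt{t}$ is a genuine constant --- a point that is indeed needed for the first term to vanish, so this is a worthwhile clarification rather than a departure.
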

\begin{proof} The result is obtained by using Theorem \ref{new_con_AMSGrad} and the following fact: 
\begin{eqnarray*} 
\sum_{i=1}^{d}\lVert{g_{1:T, i}}\rVert_2 & = & \sum_{i=1}^{d}\sqrt{g_{1,i}^2 + g_{2,i}^2, ... + g_{T,i}^2}\\
& \le & \sum_{i=1}^{d}\sqrt{TG_{\infty}^2}\\
& = & dG_{\infty}\sqrt{T},
\end{eqnarray*}
where the inequality is from the assumption that $\lVert{g_t}\rVert_{\infty} \le G_{\infty}$ for all $t\in [T]$.
\end{proof}


\section{New version of AMSGrad optimizer: AdamX}\label{new_version}
Let $f_1, f_2,..., f_T: \mathcal F \to \mathbb R $ be an arbitrary sequence of convex cost functions. If the system $\{\beta_{1,t}\}_{1\le t\le T}\}$ is kept arbitrary, as in the setting of Theorem \ref{mainthmAMSGrad}, to ensure that the regret $R(T)$ satisfies $R(T)/T\to 0$, we suggest a new algorithm as follows.
 \begin{algorithm}[H]\label{alg3}
    \caption{AdamX: a new variant of Adam and AMSGrad.}\label{AMSGradnew}
    \begin{algorithmic}
    	\State\hspace{-\algorithmicindent} \textbf{Input:} $x_1\in \mathbb R^d$, step size $\{\alpha_t\}_{t=1}^T, \{\beta_{1,t}\}_{t=1}^T, \beta_2$
	\State \hspace{-\algorithmicindent} Set $m_0 = 0, v_0 = 0$, and $\hat v_0 = 0$
   	\For {$(t=1; t\le T; t\gets t+1)$}
		\State $g_{t} = \nabla f_{t}(x_{t})$ 
		\State $m_{t} = \beta_{1,t}\cdot m_{t-1} + (1-\beta_{1,t})\cdot g_{t}$ 
		\State $v_t = \beta_2\cdot v_{t-1} + (1-\beta_2)\cdot g^2_t$ 
		\State $\hat v_{1} = v_{1}$, $\hat v_{t} = \max\{\frac{(1-\beta_{1,t})^2}{(1-\beta_{1,t-1})^2}\hat v_{t-1} , v_{t}\}$ if $t\ge 2$, and $\hat V_t = \text{diag}(\hat v_t)$
		\State $x_{t+1} =  \prod_{\mathcal F, \sqrt{\hat V_t}}(x_t - \alpha_t \cdot m_t/\sqrt{\hat v_t}) $ 
    	\EndFor
    \end{algorithmic}
    \end{algorithm}
    
With this Algorithm \ref{AMSGradnew}, the regret is bounded as follows.
\begin{thm}\label{mainthm2} Let $x_t$ and $v_t$ be the sequences obtained from Algorithm \ref{AMSGradnew}, $\alpha_t = \frac{\alpha}{\sqrt{t}}$, $\beta_1 = \beta_{1,1}$, $\beta_{1,t} \le \beta_1$ for all $t\in[T]$ and $\frac{\beta_1}{\sqrt{\beta_2}} \le 1$. Assume that $\mathcal F$ has bounded diameter $D_{\infty}$ and $\lVert{\nabla f_t(x)}\rVert_{\infty} \le G_{\infty}$ for all $t\in [T]$ and $x\in \mathcal F$. For $x_t$ generated using the AdamX (Algorithm \ref{AMSGradnew}), we have the following bound on the regret:
\begin{eqnarray*}
R(T) & \le&  \frac{dD_{\infty}^2G_{\infty}}{2\alpha(1-\beta_{1})}\sqrt{T}  + \frac{dD_{\infty}^2G_{\infty}}{2\alpha(1-\beta_{1})} \sum_{t=2}^{T}\beta_{1,t}\sqrt{(t-1)} +  \frac{\alpha\sqrt{ \ln T +1}}{(1-\beta_1)^2\sqrt{1-\beta_2}(1-\gamma)} \sum_{i=1}^{d}\lVert{g_{1:T, i}}\rVert_2 ~.
\end{eqnarray*}
\end{thm}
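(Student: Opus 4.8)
The plan is to reuse the regret decomposition of Lemma \ref{prepare_lem}. First I would note that the proof of Lemma \ref{prepare_lem} uses only the projection update $x_{t+1}=\prod_{\mathcal F,\sqrt{\hat V_t}}(x_t-\alpha_t \hat V_t^{-1/2}m_t)$, the recursion $m_t=\beta_{1,t}m_{t-1}+(1-\beta_{1,t})g_t$, and convexity, but never the specific rule defining $\hat v_t$; hence it applies verbatim to Algorithm \ref{AMSGradnew}. Thus $R(T)\le (\ref{eqmain})+(\ref{eqsecond})+(\ref{eqthird})$, and the rewriting (\ref{eqtemp2}) of (\ref{eqmain}) remains valid. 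It then suffices to bound the three pieces under the AdamX update. For (\ref{eqsecond}) I would check that Lemma \ref{mainlem} carries over: its proof uses only $\hat v_{t,i}\ge v_{t,i}$ (which holds since $\hat v_t=\max\{\cdots,v_t\}\ge v_t$) together with $\beta_{1,k}\le 1$ and $\beta_{1,k}\le\beta_1$, none of which depend on the form of $\beta_{1,t}$ or on the running-max structure of AMSGrad. Feeding the resulting bound through $\alpha_t=\alpha/\sqrt t$ exactly as in (\ref{eqsecond2}) produces the third summand of the claim.

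The heart of the argument is (\ref{eqmain}), where the AdamX rule pays off. Writing $u_{t,i}=\hat v_{t,i}/(1-\beta_{1,t})^2$, the update $\hat v_t=\max\{\frac{(1-\beta_{1,t})^2}{(1-\beta_{1,t-1})^2}\hat v_{t-1},v_t\}$ is equivalent to $u_{t,i}=\max\{u_{t-1,i},\,v_{t,i}/(1-\beta_{1,t})^2\}$, so $u_{t,i}$ is nondecreasing in $t$ and $u_{t,i}=\max_{1\le s\le t} v_{s,i}/(1-\beta_{1,s})^2$. Consequently $\frac{\sqrt{\hat v_{t,i}}}{\alpha_t(1-\beta_{1,t})}=\frac{\sqrt t}{\alpha}\sqrt{u_{t,i}}$ is a product of two nondecreasing factors, hence nondecreasing in $t$; this is precisely the positivity of the boxed difference that motivated Algorithm \ref{AMSGradnew}. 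Unlike the proof of Theorem \ref{mainthm_change_beta1}, no threshold $t_0$ (Lemma \ref{t_0}) is needed: every bracket in (\ref{eqtemp2}) is nonnegative for all $t\ge 2$. I would then bound each $(x_{t,i}-x^{*}_{,i})^2\le D_\infty^2$, telescope the resulting sum, and use $\sqrt{u_{T,i}}=\max_{s\le T}\sqrt{v_{s,i}}/(1-\beta_{1,s})\le G_\infty/(1-\beta_1)$ — where $\sqrt{v_{s,i}}\le G_\infty$ is the computation of Lemma \ref{vt} applied to $v_s$ and $1-\beta_{1,s}\ge 1-\beta_1$ — to obtain $(\ref{eqmain})\le \frac{dD_\infty^2 G_\infty}{2\alpha(1-\beta_1)}\sqrt T$, the first summand.

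For (\ref{eqthird}) I would bound $(x_{t,i}-x^{*}_{,i})^2\le D_\infty^2$, substitute $\alpha_{t-1}=\alpha/\sqrt{t-1}$, and use the boundedness of $\sqrt{\hat v_{t-1,i}}$ (the analogue of Lemma \ref{vt}) to extract the factor $\sum_{t=2}^T\beta_{1,t}\sqrt{t-1}$, giving the second summand; adding the three bounds then yields Theorem \ref{mainthm2}. The step demanding the most care — and the main obstacle — is exactly this control of $\hat v_t$. For AMSGrad, Lemma \ref{vt} rests on the identity $\hat v_t=\max\{v_1,\dots,v_t\}$, but for AdamX this identity fails: the carried-over term is rescaled by $(1-\beta_{1,t})^2/(1-\beta_{1,t-1})^2$, which for decreasing $\beta_{1,t}$ exceeds $1$ and can inflate $\hat v_t$ well beyond $G_\infty^2$. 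The correct invariant is that it is $u_{t,i}=\hat v_{t,i}/(1-\beta_{1,t})^2$, not $\hat v_{t,i}$ itself, that is uniformly bounded (by $G_\infty^2/(1-\beta_1)^2$). Since this is the same quantity whose monotonicity drives the telescoping, the uniform bound on $u_{t,i}$ and the positivity of the boxed difference are two facets of a single structural property of Algorithm \ref{AMSGradnew}, and keeping track of the $(1-\beta_{1,t})$ factors carefully in (\ref{eqthird}) is where I would concentrate the verification.
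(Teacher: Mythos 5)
Your proposal follows the same architecture as the paper's own proof: the decomposition of $R(T)$ through Lemma \ref{prepare_lem} (which, as you note, never uses the update rule for $\hat v_t$ and so applies verbatim to Algorithm \ref{AMSGradnew}), the observation that $\hat v_{t,i}\ge v_{t,i}$ makes Lemma \ref{mainlem} carry over (this is exactly the paper's Lemma \ref{mainlem2}), and monotonicity-plus-telescoping for (\ref{eqmain}). Your invariant $u_{t,i}=\hat v_{t,i}/(1-\beta_{1,t})^2$ is a cleaner packaging of what the paper splits into Lemma \ref{vtnew} (the closed form of $\hat v_t$ as a rescaled running maximum), Lemma \ref{vt2} (the bound $\sqrt{\hat v_t}\le G_\infty/(1-\beta_1)$), and the in-proof verification that the boxed differences are nonnegative. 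The repackaging buys something quantitative: bounding the telescoped sum by $\sqrt{T}\sqrt{u_{T,i}}$ with $\sqrt{u_{T,i}}=\max_{s\le T}\sqrt{v_{s,i}}/(1-\beta_{1,s})\le G_\infty/(1-\beta_1)$ gives $(\ref{eqmain})\le \frac{dD_\infty^2G_\infty}{2\alpha(1-\beta_1)}\sqrt{T}$, i.e., the first summand exactly as stated in Theorem \ref{mainthm2}; the paper instead bounds $\sqrt{\hat v_{T,i}}$ and $1/(1-\beta_{1,T})$ separately and only reaches $\frac{dD_\infty^2G_\infty}{2\alpha(1-\beta_1)^2}\sqrt{T}$, so on this term your route proves the stated constant while the paper's own proof does not.

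The one genuine issue is your treatment of (\ref{eqthird}). Your invariant gives $\sqrt{\hat v_{t-1,i}}=(1-\beta_{1,t-1})\sqrt{u_{t-1,i}}\le G_\infty/(1-\beta_1)$, and under the hypotheses of the theorem (only $\beta_{1,t}\le\beta_1$ is assumed) the extra factor $1-\beta_{1,t-1}\le 1$ cannot be improved to $1-\beta_1$. Hence careful tracking yields $(\ref{eqthird})\le \frac{dD_\infty^2G_\infty}{2\alpha(1-\beta_1)^2}\sum_{t=2}^T\beta_{1,t}\sqrt{t-1}$, with $(1-\beta_1)^2$ in the denominator --- not the single factor $1-\beta_1$ of the theorem's second summand, which your sketch claims to recover. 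This is not a defect of your argument relative to the paper: the paper's proof of this step arrives at the same $(1-\beta_1)^2$ bound, so the printed statement of Theorem \ref{mainthm2} is already inconsistent with its own proof by exactly this factor. But a complete write-up of your proof should state the second summand with $(1-\beta_1)^2$ (or explicitly flag the discrepancy) rather than assert that the printed constant is attained.
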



To prove Theorem \ref{mainthm2}, we need the following Lemmas \ref{vtnew}, \ref{vt2}, and \ref{mainlem2}.
\begin{lem}\label{vtnew} For all $t\ge 1$, we have 
\begin{eqnarray}\label{vthat}  
\hat{v}_{t}& =& \max\left\{\frac{(1-\beta_{1,t})^2}{(1-\beta_{1,s})^2}v_s, \text {~for ~ all~}1\le s\le t\right\},
\end{eqnarray}
where $\hat{v}_{t}$ is in Algorithm \ref{AMSGradnew}.
\end{lem}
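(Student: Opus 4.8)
The plan is to prove the closed-form expression for $\hat{v}_t$ in Algorithm \ref{AMSGradnew} by induction on $t$, since the recursive definition $\hat{v}_t = \max\{\frac{(1-\beta_{1,t})^2}{(1-\beta_{1,t-1})^2}\hat{v}_{t-1}, v_t\}$ is itself defined inductively and the claimed formula (\ref{vthat}) should unwind exactly this recursion. First I would verify the base case $t=1$: the formula gives the single term $\frac{(1-\beta_{1,1})^2}{(1-\beta_{1,1})^2}v_1 = v_1$, which matches the algorithm's initialization $\hat{v}_1 = v_1$.

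For the inductive step, I would assume (\ref{vthat}) holds for $t-1$, that is, $\hat{v}_{t-1} = \max_{1\le s\le t-1}\frac{(1-\beta_{1,t-1})^2}{(1-\beta_{1,s})^2}v_s$, and then substitute into the recursion. The key algebraic observation is that multiplying $\hat{v}_{t-1}$ by the factor $\frac{(1-\beta_{1,t})^2}{(1-\beta_{1,t-1})^2}$ distributes through the maximum, since this factor is a nonnegative constant (independent of $s$). Concretely,
\begin{eqnarray*}
\frac{(1-\beta_{1,t})^2}{(1-\beta_{1,t-1})^2}\hat{v}_{t-1}
&=& \frac{(1-\beta_{1,t})^2}{(1-\beta_{1,t-1})^2}\max_{1\le s\le t-1}\frac{(1-\beta_{1,t-1})^2}{(1-\beta_{1,s})^2}v_s\\
&=& \max_{1\le s\le t-1}\frac{(1-\beta_{1,t})^2}{(1-\beta_{1,s})^2}v_s,
\end{eqnarray*}
where the $(1-\beta_{1,t-1})^2$ factors cancel. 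I would then take the maximum of this quantity with $v_t$; noting that $v_t = \frac{(1-\beta_{1,t})^2}{(1-\beta_{1,t})^2}v_t$ is exactly the $s=t$ term of the desired formula, the two maxima combine into $\max_{1\le s\le t}\frac{(1-\beta_{1,t})^2}{(1-\beta_{1,s})^2}v_s$, which is (\ref{vthat}) for index $t$.

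I do not anticipate a serious obstacle here, as the proof is a routine induction. The only point requiring mild care is justifying that the scalar factor may be pulled inside (and out of) the $\max$, which is valid precisely because $\frac{(1-\beta_{1,t})^2}{(1-\beta_{1,t-1})^2}\ge 0$; since each $\beta_{1,s}<1$ under the standing hypotheses, every factor $(1-\beta_{1,s})^2$ is strictly positive and no division-by-zero or sign issue arises. The identification of $v_t$ with the $s=t$ summand is the small bookkeeping step that makes the recursion telescope cleanly into the global maximum.
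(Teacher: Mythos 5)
Your proposal is correct and follows essentially the same route as the paper: induction on $t$, pulling the nonnegative factor $\frac{(1-\beta_{1,t})^2}{(1-\beta_{1,t-1})^2}$ through the maximum, cancelling the $(1-\beta_{1,t-1})^2$ terms, and recognizing $v_t$ as the $s=t$ entry of the combined maximum. If anything, your write-up is slightly tidier than the paper's, which carries a stray factor $\frac{(1-\beta_{1,t})^2}{(1-\beta_{1,t-1})^2}$ on the $v_t$ term in intermediate lines before arriving at the same conclusion.
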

\begin{proof} We will prove (\ref{vthat}) by induction on $t$. Recall that by the update rule on $\hat{v}_{t}$, we have  $\hat v_1 \overset{\Delta}{=} v_1$ and $\hat v_t \overset{\Delta}{=} \max\{\frac{(1-\beta_{1,t})^2}{(1-\beta_{1,t-1})^2}\hat v_{t-1} , v_{t}\}$ if $t\ge 2$. Therefore,
\begin{eqnarray*} 
\hat v_2 & \overset{\Delta}{=} & \max\{\frac{(1-\beta_{1,2})^2}{(1-\beta_{1,1})^2}\hat v_{1} , v_{2}\}\\
&= & \max\{\frac{(1-\beta_{1,2})^2}{(1-\beta_{1,1})^2} v_{1} , v_{2}\}\\
&= & \max\{\frac{(1-\beta_{1,t})^2}{(1-\beta_{1,s})^2}v_s, 1\le s\le 2\}.
\end{eqnarray*} 
Assume that $$\hat{v}_{t-1} = \max\{\frac{(1-\beta_{1,t-1})^2}{(1-\beta_{1,s})^2}v_s, \text {~for ~ all~} 1\le s\le t-1\}$$
and the (\ref{vthat}) holds for all $1\le j\le t-1$. 
Since 
$$\hat{v}_{t}\overset{\Delta}{=} \max\{\frac{(1-\beta_{1,t})^2}{(1-\beta_{1,t-1})^2}\hat{v}_{t-1}, v_t\},$$ 
we have 
\begin{eqnarray*} 
\hat{v}_{t}& {=} & \max\{\frac{(1-\beta_{1,t})^2}{(1-\beta_{1,t-1})^2}\left(\max\{\frac{(1-\beta_{1,t-1})^2}{(1-\beta_{1,s})^2}\hat{v}_s, \text {~for ~ all~}1\le s\le t-1\}\right), v_t\}\\
& = & \max\{  \max \{\frac{(1-\beta_{1,t})^2}{(1-\beta_{1,t-1})^2}\frac{(1-\beta_{1,t-1})^2}{(1-\beta_{1,s})^2}{v}_s,\text {~for ~ all~}1\le s\le t-1\}, \frac{(1-\beta_{1,t})^2}{(1-\beta_{1,t-1})^2}v_t\}\\
& = & \max\{ \{\frac{(1-\beta_{1,t})^2}{(1-\beta_{1,s})^2} {v}_s, \text {~for ~ all~}1\le s\le t-1\}, \frac{(1-\beta_{1,t})^2}{(1-\beta_{1,t-1})^2}v_t\}\\
& = & \max\{ \frac{(1-\beta_{1,t})^2}{(1-\beta_{1,s})^2} {v}_s, \text {~for ~ all~}1\le s\le t\},
\end{eqnarray*}
which ends the proof.
\end{proof}

\begin{lem}\label{vt2} For all $t\ge 1$, we have $\sqrt{\hat{v}_{t}} \le \frac{G_{\infty}}{1-\beta_{1}}$, where $\hat{v}_{t}$ is in Algorithm \ref{AMSGradnew}.
\end{lem}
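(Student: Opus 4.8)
The plan is to invoke the closed form for $\hat v_t$ supplied by Lemma \ref{vtnew} and then reduce everything to the single-term estimate $\sqrt{v_s}\le G_{\infty}$ that was already carried out inside the proof of Lemma \ref{vt}. In other words, no genuinely new computation is needed; the work is in bookkeeping the prefactor introduced by the modified update rule of Algorithm \ref{AMSGradnew}.

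First I would apply Lemma \ref{vtnew}. Since $\hat v_t$ is the maximum over $1\le s\le t$ of the quantities $\frac{(1-\beta_{1,t})^2}{(1-\beta_{1,s})^2}v_s$, there is some index $1\le s\le t$ realizing the maximum, so that
$$\hat v_t = \frac{(1-\beta_{1,t})^2}{(1-\beta_{1,s})^2}\,v_s, \qquad\text{hence}\qquad \sqrt{\hat v_t} = \frac{1-\beta_{1,t}}{1-\beta_{1,s}}\sqrt{v_s}.$$
Next I would bound the two factors separately. For $\sqrt{v_s}$ the estimate is exactly the one in the proof of Lemma \ref{vt}: writing $v_s=(1-\beta_2)\sum_{k=1}^{s}\beta_2^{s-k}g_k^2$, using $|g_k|\le G_{\infty}$, and summing the geometric series $\sum_{k=1}^{s}\beta_2^{s-k}\le \frac{1}{1-\beta_2}$ via Lemma \ref{Taylor} gives $\sqrt{v_s}\le G_{\infty}$. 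For the prefactor I would use $\beta_{1,t}\ge 0$, so that $1-\beta_{1,t}\le 1$, together with the hypothesis $\beta_{1,s}\le\beta_1$, so that $1-\beta_{1,s}\ge 1-\beta_1>0$; combining these two inequalities in the correct directions yields $\frac{1-\beta_{1,t}}{1-\beta_{1,s}}\le\frac{1}{1-\beta_1}$. Multiplying the two bounds then gives $\sqrt{\hat v_t}\le\frac{G_{\infty}}{1-\beta_1}$, as required.

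The argument is essentially mechanical once Lemma \ref{vtnew} is available, and I do not expect a substantive obstacle. The only step demanding care is the prefactor estimate: one must bound the numerator above by $1$ (from $\beta_{1,t}\ge 0$) and the denominator below by $1-\beta_1$ (from $\beta_{1,s}\le\beta_1$), and it is precisely the $(1-\beta_{1,t})^2/(1-\beta_{1,s})^2$ weighting built into the $\hat v_t$ update of Algorithm \ref{AMSGradnew} that generates the extra factor $\frac{1}{1-\beta_1}$ relative to the sharper bound $\sqrt{\hat v_t}\le G_{\infty}$ of Lemma \ref{vt}.
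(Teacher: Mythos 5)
Your proposal is correct and follows essentially the same route as the paper: invoke Lemma \ref{vtnew} to write $\hat v_t = \frac{(1-\beta_{1,t})^2}{(1-\beta_{1,s})^2}v_s$ for the maximizing index $s$, bound $\sqrt{v_s}\le G_\infty$ via the geometric-series argument of Lemma \ref{vt}, and bound the prefactor by $\frac{1-\beta_{1,t}}{1-\beta_{1,s}}\le\frac{1}{1-\beta_1}$ using $\beta_{1,t}\ge 0$ and $\beta_{1,s}\le\beta_1$. The paper merely interleaves these two estimates in a single chain of inequalities rather than separating them as you do; the content is identical.
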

\begin{proof} By Lemma \ref{vtnew},
$$\hat{v}_{t} = \max\{\frac{(1-\beta_{1,t})^2}{(1-\beta_{1,s})^2}v_s, 1\le s\le t\}.$$ 
Therefore, there is some $1\le s\le t$ such that $\hat{v}_{t} = \frac{(1-\beta_{1,t})^2}{(1-\beta_{1,s})^2}v_s$. Hence, 
\begin{eqnarray*} 
\sqrt{\hat{v}_{t}} & = & \sqrt{\frac{(1-\beta_{1,t})^2}{(1-\beta_{1,s})^2}v_s}\\
& = & \sqrt{1-\beta_2}\left(\frac{1-\beta_{1,t}}{1-\beta_{1,s}}\right)\sqrt{\sum_{k=1}^{s}\beta_2^{s-k}g^2_{k}}\\
&\le&  \sqrt{1-\beta_2}\left(\frac{1-\beta_{1,t}}{1-\beta_{1,s}}\right)\sqrt{\sum_{k=1}^{s}\beta_2^{s-k} (\max_{1\le j\le s}{|g_{j}|})^2}\\
&=& G_{\infty}\sqrt{1-\beta_2}\left(\frac{1-\beta_{1,t}}{1-\beta_{1,s}}\right)\sqrt{\sum_{k=1}^{s}\beta_2^{s-k}}\\
& \le & G_{\infty}\sqrt{1-\beta_2}\left(\frac{1-\beta_{1,t}}{1-\beta_{1,s}}\right)\frac{1}{\sqrt{1-\beta_2}}\\
& = & \left(\frac{1-\beta_{1,t}}{1-\beta_{1,s}}\right)G_{\infty}\\
& \le & \frac{G_{\infty}}{1-\beta_{1}},
\end{eqnarray*}
which ends the proof.
\end{proof}

\begin{lem}\label{mainlem2} For the parameter settings and conditions assumed in Theorem \ref{mainthm2}, we have
$$\sum_{t=1}^T\frac{{m}^2_{t,i}}{\sqrt{t\hat{v}_{t,i}}} \le \frac{\sqrt{ \ln T +1} }{(1-\beta_1)\sqrt{1-\beta_2}(1-\gamma)}\lVert{g_{1:T, i}}\rVert_2 .$$
\end{lem}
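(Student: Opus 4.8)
The plan is to recognize that the statement of this lemma is word-for-word identical to Lemma \ref{mainlem}, and that the proof given there invoked the definition of $\hat v_t$ at only a single point, namely the opening inequality $\hat v_{t,i} \ge v_{t,i}$ used to pass from $m^2_{t,i}/\sqrt{t\hat v_{t,i}}$ to $m^2_{t,i}/\sqrt{t v_{t,i}}$. Every subsequent manipulation is expressed purely in terms of $m_{t,i}$, $v_{t,i}$, the gradients $g_{k,i}$, the scalars $\beta_{1,k}$ and $\beta_2$, and $\gamma = \beta_1/\sqrt{\beta_2}$, and is insensitive to how $\hat v_t$ is generated. Consequently, once the inequality $\hat v_{t,i} \ge v_{t,i}$ is re-established for Algorithm \ref{AMSGradnew}, the entire chain of estimates transfers verbatim.

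First I would verify $\hat v_{t,i} \ge v_{t,i}$ for the iterates of Algorithm \ref{AMSGradnew}. This follows at once from Lemma \ref{vtnew}: the closed form $\hat v_t = \max\{\frac{(1-\beta_{1,t})^2}{(1-\beta_{1,s})^2} v_s : 1\le s\le t\}$ contains, at the index $s=t$, the term $\frac{(1-\beta_{1,t})^2}{(1-\beta_{1,t})^2} v_t = v_t$; since $\hat v_t$ is the maximum over all admissible $s$, we obtain $\hat v_t \ge v_t$ coordinatewise. Note that this is the only property of the AdamX update rule the argument requires; the more delicate factor $(1-\beta_{1,t})^2/(1-\beta_{1,s})^2$ is what Lemma \ref{vt2} exploits, but it plays no role in the present estimate.

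With this in hand, I would replay the proof of Lemma \ref{mainlem} line by line: expand $m_{t,i}$ as $\sum_{k=1}^t (1-\beta_{1,k})(\prod_{j=k+1}^t \beta_{1,j}) g_{k,i}$ and $v_{t,i}$ as $(1-\beta_2)\sum_{k=1}^t \beta_2^{t-k} g^2_{k,i}$; apply Cauchy--Schwarz (Lemma \ref{CS}) to the squared numerator; bound $(1-\beta_{1,k})^2 \le 1$ and $\prod_{j=k+1}^t \beta_{1,j} \le \beta_1^{t-k}$ using the hypothesis $\beta_{1,k} \le \beta_1$ of Theorem \ref{mainthm2}; collapse the geometric sum $\sum_k \beta_1^{t-k}$ via Lemma \ref{Taylor}; apply Lemma \ref{sum} to split the quotient; rewrite $\beta_1^{t-k}/\sqrt{\beta_2^{t-k}}$ as $\gamma^{t-k}$; sum over $t$, interchange the order of summation, and bound the resulting tail $\sum_{k=t}^T \gamma^{k-t}/\sqrt{k}$ by $\frac{1}{\sqrt t}\cdot\frac{1}{1-\gamma}$ (using $\gamma \le 1$ and Lemma \ref{Taylor}); and finish with one further application of Cauchy--Schwarz together with the harmonic bound of Lemma \ref{Harmonic}. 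Since the hypotheses $\beta_{1,t}\le\beta_1$ and $\gamma\le 1$ assumed in Theorem \ref{mainthm2} are precisely the two inputs driving these estimates, the stated bound results.

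There is essentially no obstacle here. The content of the lemma is entirely absorbed into the single reduction $\hat v_{t,i}\ge v_{t,i}$, which Lemma \ref{vtnew} supplies immediately; everything after that is a verbatim transcription of an argument already carried out for Lemma \ref{mainlem}, so the work is confined to checking that this one structural hypothesis survives the change from Algorithm \ref{AMSGrad} to Algorithm \ref{AMSGradnew}.
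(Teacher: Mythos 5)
Your proposal is correct and follows exactly the paper's own argument: the paper's proof of this lemma likewise invokes Lemma \ref{vtnew} to conclude $\hat{v}_{t,i} \ge v_{t,i}$ (via the index $s=t$ in the max) and then declares the remainder identical to the proof of Lemma \ref{mainlem}. Your write-up is in fact more explicit than the paper's, since you verify that the $\hat{v}_{t,i} \ge v_{t,i}$ reduction is the \emph{only} point where the update rule of Algorithm \ref{AMSGradnew} enters the chain of estimates.
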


\begin{proof}
Since for all $ t\ge 1$
$$\hat{v}_{t,i} = \max\{\frac{(1-\beta_{1,t})^2}{(1-\beta_{1,s})^2}v_s (1\le s\le t)\},$$ by Lemma \ref{vtnew},
we have $\hat{v}_{t,i} \ge v_{t,i}$, and hence the proof is the same as that of Lemma \ref{mainlem}.
\end{proof}


\begin{proof}[Proof of Theorem \ref{mainthm2}] Similarly to the proof of Theorem \ref{mainthm_change_beta1}, we need to bound (\ref{eqmain}), (\ref{eqsecond}), and (\ref{eqthird}). By using Lemma \ref{mainlem2}, we obtain the same bound for (\ref{eqsecond}) as in the proof of Theorem \ref{mainthm_change_beta1}, that is,

\begin{eqnarray*}
(\ref{eqsecond}) = \sum_{i=1}^{d} \sum_{t=1}^{T} \frac{\alpha_t}{1-\beta_{1}} \frac{ m_{t,i}^2}{\sqrt{\hat v_{t,i}}} 
& = & \frac{\alpha}{1-\beta_{1}}\sum_{i=1}^{d} \sum_{t=1}^{T} \frac{ m_{t,i}^2}{\sqrt{t\hat v_{t,i}}}\\
& \le &\frac{\alpha\sqrt{ \ln T +1}}{(1-\beta_1)^2\sqrt{1-\beta_2}(1-\gamma)} \sum_{i=1}^{d}\lVert{g_{1:T, i}}\rVert_2,
\end{eqnarray*}
where the last inequality is by Lemma \ref{mainlem2}. 
Now we bound (\ref{eqthird}). By the assumption that $\lVert{x_m-x_n}\rVert_{\infty} \le D_{\infty}$ for any $m,n\in \{1,...,T\}$,  $\alpha_t = \alpha/\sqrt{t}$, and $\beta_{1,t} = \beta_1\lambda^{t-1} \le \beta_1 \le 1$, we obtain
\begin{eqnarray*}
(\ref{eqthird}) = \sum_{i=1}^{d} \sum_{t=2}^{T}\frac{\beta_{1,t}\sqrt{\hat{v}_{t-1,i}}}{2\alpha_{t-1}(1-\beta_{1,t})}  (x_{t,i} - x^{*}_{,i})^2
& \le & \frac{D_{\infty}^2}{2\alpha(1-\beta_{1})} \sum_{i=1}^{d} \sum_{t=2}^{T}\beta_{1,t}\sqrt{(t-1)\hat{v}_{t-1,i}}.
\end{eqnarray*}
Therefore, from Lemma \ref{vt2}, we obtain
\begin{eqnarray*}
(\ref{eqthird}) &\le & \frac{dD_{\infty}^2G_{\infty}}{2\alpha(1-\beta_{1})^2}  \sum_{t=2}^{T}\beta_{1,t}\sqrt{(t-1)}.
\end{eqnarray*}
Finally, we will bound (\ref{eqmain}). By the inequality (\ref{eqtemp2}) and replacing $\alpha_t = \frac{\alpha}{\sqrt t} (1\le t\le T)$, we obtain 
\begin{eqnarray*}
 (\ref {eqmain}) 
 & \le &  \sum_{i=1}^{d} \frac{ \sqrt{\hat{v}_{1,i}}}{2\alpha(1-\beta_{1})}   (x_{1, i} - x^{*}_{,i})^2  + 
 \frac{1}{2\alpha}\sum_{i=1}^{d} \sum_{t=2}^{T} (x_{t, i} - x^{*}_{,i})^2 \left( \frac{\sqrt{t \hat{v}_{t,i}}}{1-\beta_{1,t}} - \frac{\sqrt{(t-1)\hat{v}_{t-1,i}}}{1-\beta_{1,t-1}}  \right)
\end{eqnarray*}
Moreover, by the update rule of Algorithm \ref{AMSGradnew}, we have
 $$\hat v_{t,i} = \max\{\frac{(1-\beta_{1,t})^2}{(1-\beta_{1,t-1})^2}\hat v_{t-1,i}, v_{t,i}\}.$$ 
 Therefore,
$\hat v_{t,i} \ge \frac{(1-\beta_{1,t})^2}{(1-\beta_{1,t-1})^2}\hat v_{t-1,i}$, and hence 
\begin{eqnarray*}
\frac{\sqrt{t \hat{v}_{t,i}}}{1-\beta_{1,t}} - \frac{\sqrt{(t-1)\hat{v}_{t-1,i}}}{1-\beta_{1,t-1}} 
& \ge& \frac{\sqrt{t\frac{(1-\beta_{1,t})^2}{(1-\beta_{1,t-1})^2}\hat v_{t-1,i}}}{1-\beta_{1,t}} - \frac{\sqrt{(t-1)\hat{v}_{t-1,i}}}{1-\beta_{1,t-1}}\\
& = & \frac{\sqrt{t \hat{v}_{t-1,i}}}{1-\beta_{1,t-1}} - \frac{\sqrt{(t-1)\hat{v}_{t-1,i}}}{1-\beta_{1,t-1}} \\
& > & 0.
\end{eqnarray*}
Now by the positivity of the essential formula $\frac{\sqrt{t \hat{v}_{t,i}}}{1-\beta_{1,t}} - \frac{\sqrt{(t-1)\hat{v}_{t-1,i}}}{1-\beta_{1,t-1}}$, we obtain
\begin{eqnarray*}
 (\ref {eqmain}) 
 & \le &  
 \frac{D_{\infty}^2 }{2\alpha}\sum_{i=1}^{d} \frac{ \sqrt{\hat{v}_{1,i}}}{1-\beta_{1}}  +
 \frac{D_{\infty}^2 }{2\alpha}\sum_{i=1}^{d} \sum_{t=2}^{T} \left( \frac{\sqrt{t \hat{v}_{t,i}}}{1-\beta_{1,t}} - \frac{\sqrt{(t-1)\hat{v}_{t-1,i}}}{1-\beta_{1,t-1}}  \right)\\
 & = & \frac{D_{\infty}^2}{2\alpha}\sum_{i=1}^{d} \frac{\sqrt{T\hat{v}_{T,i}} }{1-\beta_{1,T}}\\
 & \le &  \frac{dD_{\infty}^2G_{\infty}}{2\alpha(1-\beta_{1})^2}\sqrt{T}~,
 \end{eqnarray*}
where the last inequality is by Lemma \ref{vt2}. Hence we obtain the desired upper bound for $R(T)$.
\end{proof}

\begin{cor}\label{ge_cor}
With the same assumption as in Theorem \ref{mainthm2}, and for all $0 \le \beta_{1,t} < 1$ satisfying 
$$\lim_{T\to \infty}\frac{\sum_{t=2}^{T}\beta_{1,t}\sqrt{t-1}}{T} = 0,$$ AdamX achieves the following guarantee:
$$\lim_{T\to \infty} \frac{R(T)}{T}=0.$$
\end{cor}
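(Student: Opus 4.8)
The plan is to start from the explicit regret bound established in Theorem \ref{mainthm2}, divide both sides by $T$, and verify that each of the three summands on the right-hand side tends to $0$ as $T \to \infty$. Since that bound is a sum of three nonnegative terms (note $\beta_{1,t}\ge 0$, $\sqrt{t-1}\ge 0$, and the gradient norms are nonnegative), it suffices to treat the three contributions one at a time and add the resulting limits.

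For the first term, $\frac{dD_{\infty}^2G_{\infty}}{2\alpha(1-\beta_1)}\sqrt{T}$, dividing by $T$ leaves a constant multiplying $1/\sqrt{T}$, which clearly vanishes. For the second term, dividing by $T$ produces exactly $\frac{dD_{\infty}^2G_{\infty}}{2\alpha(1-\beta_1)}\cdot\frac{\sum_{t=2}^{T}\beta_{1,t}\sqrt{t-1}}{T}$, and this is precisely the quantity whose limit is assumed to be $0$ in the hypothesis of the corollary; so no further work is needed there.

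The only term requiring an extra ingredient is the third. Here I would reuse the aggregate gradient estimate from the proof of Corollary \ref{cor}, namely $\sum_{i=1}^{d}\lVert g_{1:T,i}\rVert_2 \le dG_{\infty}\sqrt{T}$, which follows from the assumption $\lVert \nabla f_t(x)\rVert_{\infty} \le G_{\infty}$. Substituting this into the third term and dividing by $T$ yields a quantity bounded by a constant times $\frac{\sqrt{\ln T + 1}\,\sqrt{T}}{T} = \sqrt{\tfrac{\ln T + 1}{T}}$, which tends to $0$ because the logarithm grows strictly slower than any positive power of $T$.

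Combining the three limits gives $R(T)/T \to 0$, as claimed. The argument is essentially a routine limit computation; the only mildly delicate point is the third term, where one must recall the $O(\sqrt{T})$ bound on the total gradient norm so that the surviving factor is the vanishing $\sqrt{(\ln T+1)/T}$ rather than something that fails to decay. I do not anticipate any genuine obstacle beyond correctly invoking that bound.
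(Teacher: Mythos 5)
Your proposal is correct and follows essentially the same route as the paper: start from the regret bound of Theorem \ref{mainthm2}, divide by $T$, and observe that the first and third terms vanish (the third via the bound $\sum_{i=1}^{d}\lVert g_{1:T,i}\rVert_2 \le dG_{\infty}\sqrt{T}$) while the middle term vanishes by hypothesis. The paper's own proof is terser—it merely notes that only the middle term needs attention since its constant factor is independent of $T$—but the underlying argument is the one you spelled out.
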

\begin{proof}
By Theorem \ref{mainthm2}, it is sufficient to consider the term 
$$\frac{dD_{\infty}^2 G_{\infty}}{2\alpha(1-\beta_{1})^2} \sum_{t=2}^{T}\beta_{1,t}\sqrt{t-1}$$
on the right hand side of the upper bound for $R(T)$ in Theorem \ref{mainthm2}. Because $\frac{dD_{\infty}^2 G_{\infty}}{2\alpha(1-\beta_{1})^2}$ is bounded and does not depend on $T$, the statement follows.
\end{proof}
When either $\beta_{1,t} = \beta_1\lambda^{t-1}$ for some $\lambda \in (0,1)$, or $\beta_{1,t} = \frac{1}{t}$ in Theorem \ref{mainthm2}, we obtain the following  guarantee that the average regret of AdamX converges.

\begin{cor}\label{bound} With the same assumption as in Theorem \ref{mainthm2}, and either $\beta_{1,t} = \beta_1\lambda^{t-1}$ for some $\lambda \in (0,1)$, or  $\beta_{1,t} = \frac{1}{t}$, 
AdamX achieves the following guarantee:
$$\lim_{T\to \infty} \frac{R(T)}{T} =  0.$$ 
\end{cor}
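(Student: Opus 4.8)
The plan is to reduce everything to the sufficient condition already isolated in Corollary \ref{ge_cor}. Both candidate schedules satisfy the standing hypotheses of Theorem \ref{mainthm2}, so Corollary \ref{ge_cor} guarantees that $R(T)/T\to 0$ will follow the moment we verify
\[
\lim_{T\to\infty}\frac{\sum_{t=2}^{T}\beta_{1,t}\sqrt{t-1}}{T}=0
\]
for each of the two choices $\beta_{1,t}=\beta_1\lambda^{t-1}$ and $\beta_{1,t}=1/t$. Thus the entire corollary comes down to bounding the single sum $\sum_{t=2}^{T}\beta_{1,t}\sqrt{t-1}$ in each case and checking that it is $o(T)$; there is no need to revisit the regret decomposition itself.

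For the geometric schedule $\beta_{1,t}=\beta_1\lambda^{t-1}$ I would crudely bound $\sqrt{t-1}\le t$ and use the second identity in Lemma \ref{Taylor} to obtain $\sum_{t=2}^{T}\beta_1\lambda^{t-1}\sqrt{t-1}\le \beta_1\sum_{t\ge1} t\lambda^{t-1}=\beta_1/(1-\lambda)^2$. The numerator is then bounded by a constant independent of $T$, so dividing by $T$ sends the ratio to $0$. For the harmonic schedule $\beta_{1,t}=1/t$ I would instead use $\sqrt{t-1}\le\sqrt t$ to get $\sum_{t=2}^{T}\tfrac{\sqrt{t-1}}{t}\le\sum_{t=1}^{T}\tfrac{1}{\sqrt t}\le 2\sqrt T$ by Lemma \ref{sqrt}; dividing by $T$ yields a bound of at most $2/\sqrt T$, which again tends to $0$.

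Combining either estimate with Corollary \ref{ge_cor} then gives the claim. I do not expect a genuine obstacle: the argument is purely a matter of two elementary summation bounds, both immediate from the geometric-series identity of Lemma \ref{Taylor} and the square-root sum bound of Lemma \ref{sqrt}. The only point meriting a moment's care is the deliberately loose replacement of $\sqrt{t-1}$ by $t$ (respectively $\sqrt t$), which is valid for every $t\ge2$ and costs nothing asymptotically, so the verification of the hypothesis of Corollary \ref{ge_cor} goes through cleanly in both cases.
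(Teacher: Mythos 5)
Your proposal is correct and follows exactly the paper's own route: reduce to the sufficient condition of Corollary \ref{ge_cor}, then bound $\sum_{t=2}^{T}\beta_{1,t}\sqrt{t-1}$ by a constant $\beta_1/(1-\lambda)^2$ in the geometric case via Lemma \ref{Taylor}, and by $2\sqrt{T}$ in the harmonic case via Lemma \ref{sqrt}. The paper's proof uses the same two elementary estimates (including the same loose bounds $\sqrt{t-1}\le t$ and $\sqrt{t-1}/t\le 1/\sqrt{t}$), so there is nothing to add.
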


\begin{proof} By Corollary  \ref{ge_cor}, it is sufficient to consider the term 
$$\sum_{t=2}^{T}\beta_{1,t}\sqrt{t-1}.$$
When $\beta_{1,t} = \beta_1\lambda^{t-1}$ for some $\lambda \in (0,1)$, we have
\begin{eqnarray}\label{last1}
 \sum_{t=2}^{T}\beta_{1,t}\sqrt{t-1}
& =&    \sum_{t=2}^{T}\beta_1\lambda^{t-1}\sqrt{t-1}\\
 &\le&    \sum_{t=2}^{T}\sqrt{(t-1)} \lambda^{t-1} \nonumber\\
& \le &   \sum_{t=2}^{T}t \lambda^{t-1}\nonumber\\
 &\le & \frac{1}{(1-\lambda)^2} \nonumber
\end{eqnarray}
where the first inequality is from the property that $\beta_1 \le1$, and
the last inequality is from Lemma \ref{Taylor}. 
When $\beta_{1,t} = \frac{1}{t}$, we obtain 
\begin{eqnarray}\label{last2}
 \sum_{t=2}^{T}\beta_{1,t}\sqrt{t-1}
&= &  \sum_{t=2}^{T}\frac{\sqrt{t-1}}{t}\\
&\le &  \sum_{t=2}^{T}\frac{1}{\sqrt{t}}\nonumber\\
 &\le &  2\sqrt T,
 \nonumber
\end{eqnarray}
where the last inequality is from Lemma \ref{sqrt}. Now, by combining (\ref{last1}) and (\ref{last2}) with Corollary \ref{ge_cor}, we obtain the desired result.
\end{proof}

\section{{Experiments}}
While we consider our main contributions as the theoretical analyses on AMSGrad and AdamX in the previous sections, we provide experimental results in this section for AMSGrad and AdamX. Concretely, we use the PyTorch code for AMSGrad\footnote{https://pytorch.org/docs/stable/\_modules/torch/optim/adam.html} via setting the boolean flag {\tt amsgrad = True}. The code for AdamX is based on that of AMSGrad, with corresponding modifications as in Algorithm \ref{AMSGradnew}. The parameters for AMSGrad and AdamX are identical in our experiments, namely $(\beta_1, \beta_2)=(0.9, 0.999)$, the term added to the denominator to improve numerical stability is $\epsilon = 10^{-8}$, and and additionally we set $\beta_{1,t} = \beta_1\lambda^{t-1}$ with $\lambda=0.001$ to make use of Corollary \ref{bound} on the convergence of AdamX. 

The learning rate is scheduled for both optimizers AMSGrad and AdamX as follows: $10^{-3}$, $10^{-4}$, $10^{-5}$, $10^{-6}$, $10^{-6}/2$ if the epoch is correspondingly in the ranges $[0,80]$, $[81, 120]$, $[121, 160]$, $[161, 180]$, $[181, 200]$. We use CIFAR\footnote{https://www.cs.toronto.edu/~kriz/cifar.html}-10 (containing  50000 training images and 10000 test images of size $32\times 32$) as the dataset and the residual networks ResNet18 \cite{Resnet_cvpr2016} and PreActResNet18 \cite{Resnet_ECCV2016} for training with batch size is 128. The testing result is given in Figure \ref{experiment_fig} where one can see that AMSGrad and AdamX behaves similarly, which supports our theoretical results on the convergence of both AMSGrad (Section \ref{new_con_AMSGrad}) and AdamX (Section \ref{new_version}).

\begin{figure}[t]
\includegraphics[scale=0.6]{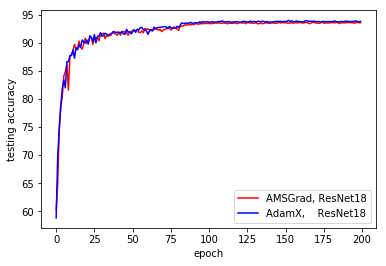} 
\includegraphics[scale=0.6]{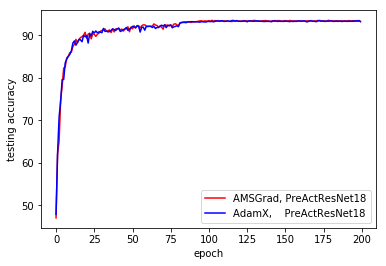} 
\caption{Testing accuracies over CIFAR-10 using AMSGrad and AdamX, with different neural network models.}\label{experiment_fig}
\end{figure}

{\color{black}
\section{Conclusion}
We have shown that  the convergence proof of AMSGrad \cite{AMSGrad} is  problematic, and  presented various fixes for it, which include a new and slightly modified version called AdamX. Along the lines, we also observe  that the issue has been neglected in various works such as in \cite[Theorem 10.5]{KingmaB14}, \cite[Theorem 4]{LuoXiongLiu}, \cite[Theorem 4.4]{BoGoWe}, \cite[Theorem 4.2]{Padam}. Our work helps ensure  the theoretical foundation of those optimizers.
}


\begin{thebibliography}{1}

\bibitem{SGD}
Herbert Robbins and Sutton Monro,
\newblock "A stochastic approximation method",
\newblock {\em The Annals of Mathematical Statistics}, vol. 22, no. 3, 1951, pp. 400--407.

\bibitem{KingmaB14}
Diederik~P. Kingma and Jimmy Ba. (2015).
\newblock {Adam}: {A} method for stochastic optimization.
\newblock Presented at {\em International Conference on Learning Representations (ICLR)}.
\newblock {[Online]. Available: https://arxiv.org/pdf/1412.6980.pdf}

\bibitem{AMSGrad}
Sashank~J. Reddi, Satyen Kale, and Sanjiv Kumar. (2018).
\newblock On the convergence of {A}dam and beyond.
\newblock Presented at {\em International Conference on Learning Representations (ICLR)}.
\newblock {[Online]. Available: https://openreview.net/pdf?id=ryQu7f-RZ}

\bibitem{Brendanetal}
H. Brendan McMahan and Matthew Streeter,
\newblock "Adaptive Bound Optimization for Online Convex Optimization",  
\newblock {\em Proceedings of the 23rd Annual Conference. on. Learning Theory (COLT)}, 2010, pp. 244 -- 256,
Also in {\em CoRR}, abs/1002.4908, 2010.
\newblock {[Online]. Available: https://arxiv.org/abs/1002.4908}

\bibitem{LuoXiongLiu}
Liangchen Luo, Yuanhao Xiong, and Yan Liu. (2019).
\newblock Adaptive gradient methods with dynamic bound of learning rate.
\newblock Present at {\em International Conference on Learning Representations (ICLR)}.
\newblock {[Online]. Available: https://openreview.net/pdf?id=Bkg3g2R9FX}

\bibitem{BoGoWe}
Sebastian Bock, Josef Goppold, and Martin Wei{\ss},
\newblock "An improvement of the convergence proof of the {Adam}-optimizer",
\newblock {\em CoRR}, abs/1804.10587, 2018.
\newblock {[Online]. Available: https://arxiv.org/pdf/1804.10587.pdf}

\bibitem{Padam}
Jinghui Chen and Quanquan Gu,
\newblock "Closing the generalization gap of adaptive gradient methods in
  training deep neural networks",
\newblock {\em CoRR}, abs/1806.06763, 2018.
\newblock {[Online]. Available: https://arxiv.org/pdf/1806.06763.pdf}

\bibitem{Resnet_cvpr2016}
Kaiming He, Xiangyu Zhang, Shaoqing Ren, and Jian Sun,
\newblock "Deep Residual Learning for Image Recognition",
\newblock {\em CVPR 2016}, pp. 770--778, 2016. 
\newblock {[Online]. Available: https://arxiv.org/abs/1512.03385}

\bibitem{Resnet_ECCV2016}
Kaiming He, Xiangyu Zhang, Shaoqing Ren, and Jian Sun,
\newblock "Identity Mappings in Deep Residual Networks",
\newblock {\em ECCV (4) 2016}, pp. 630--645, 2016.
\newblock {[Online]. Available: https://arxiv.org/abs/1603.05027}

\end{thebibliography}

\end{document}